\newtheorem*{corollary}{Corollary}
\newtheorem{theorem}{Theorem}
\newtheorem{proposition}{Proposition}
\title{Enhancing Diffusion Models for Inverse Problems with Covariance-Aware Posterior Sampling}
\author{
    %Authors
    % All authors must be in the same font size and format.
    Shayan Mohajer Hamidi\\
En-Hui Yang
}
\title{My Publication Title --- Single Author}
\author {
    Author Name
}
\title{My Publication Title --- Multiple Authors}
\author {
    % Authors
    First Author Name\textsuperscript{\rm 1,\rm 2},
    Second Author Name\textsuperscript{\rm 2},
    Third Author Name\textsuperscript{\rm 1}
}
\begin{document}

\maketitle

\begin{abstract}
Inverse problems exist in many disciplines of  science and engineering. In computer vision, for example,  tasks such as inpainting, deblurring, and super-resolution can be effectively modeled as inverse problems. Recently, denoising diffusion probabilistic models (DDPMs) are shown to provide a promising solution to noisy linear inverse problems without the need for additional task-specific training. Specifically, with the prior provided by DDPMs, one can sample from the posterior by approximating the likelihood. In the literature, approximations of the likelihood are often based on the mean of conditional densities of the reverse process, which can be obtained using Tweedie’s formula. To obtain a better approximation to the likelihood, in this paper we first derive a closed-form formula for the covariance of the reverse process. Then, we propose a method based on finite difference method to approximate this covariance such that it can be readily obtained from the existing pre-trained DDPMs, thereby not increasing the complexity compared to existing approaches. Finally, based on the mean and approximated covariance of the reverse process, we present a new approximation to the likelihood. We refer to this method as \textbf{c}ovariance-\textbf{a}ware \textbf{d}iffusion \textbf{p}osterior \textbf{s}ampling (CA-DPS). Experimental results show that CA-DPS significantly improves reconstruction performance without requiring hyperparameter tuning. The code for the paper is put in the supplementary materials.
\end{abstract}

\section{Introduction} \label{sec:intro}
Denoising diffusion probabilistic models (DDPMs) \citep{ho2020denoising} have made remarkable advancements in data synthesis over the past few years, revolutionizing fields such as image synthesis \citep{nichol2022glide,saharia2022photorealistic,zhang2023text}, video generation \citep{ho2022video} and audio synthesis \citep{kongdiffwave}.

Given the powerful ability of DDPMs to estimate target distributions, one promising application is to use them to solve linear inverse problems such as denoising, inpainting, deblurring, and super-resolution. These tasks aim to recover a signal $\boldsymbol{x}_0$ (e.g., a face image) from a measurement $\boldsymbol{y}$, where $\boldsymbol{y}$ is related to $\boldsymbol{x}_0$ through the forward measurement operator $\bold{A}$ and detector noise $\boldsymbol{n}$ \citep{song2020denoising,chungdiffusion, song2023pseudoinverse, dou2024diffusion,peng2024improving}. A naive approach to using DDPMs for solving inverse problems is to train a conditional DDPM to estimate the posterior $p(\boldsymbol{x}_0 | \boldsymbol{y})$ through  supervised learning. However, this approach can be computationally demanding, as it requires training separate models for different measurement operators.

To tackle the issue mentioned above, a newer method to approximate the posterior seeks to leverage pre-trained unconditional DDPMs that estimate the prior $p(\boldsymbol{x}_0)$, thereby avoiding the need for additional training. In this approach, the prior $p(\boldsymbol{x}_0)$ obtained from DDPMs is combined with the likelihood $p(\boldsymbol{y} | \boldsymbol{x}_0)$ to sample from the posterior distribution for inverse problems. However, because the likelihood term $p(\boldsymbol{y} | \boldsymbol{x}_0)$ is analytically intractable in the context of DDPMs due to their time-dependent nature, it must be approximated in some way \citep{chungdiffusion}.

To approximate the likelihood $p(\boldsymbol{y} | \boldsymbol{x}_0)$, there are mainly two approaches in the literature as we discuss in the sequel. The first approach relies on projections onto the measurement subspace \citep{song2020score,chung2022come,choi2021ilvr}. However, these projection-based methods perform poorly in the presence of noise in the measurements, as the noise tends to be amplified during the generative process due to the ill-posed nature of inverse problems \citep{chungdiffusion}. The second approach leverages the relationship  $p(\boldsymbol{y} | \boldsymbol{x}_t) = \int p(\boldsymbol{y} | \boldsymbol{x}_0) p(\boldsymbol{x}_0 | \boldsymbol{x}_t) d \boldsymbol{x}_0$ in DDPMs; as such,  assuming that $p(\boldsymbol{y} | \boldsymbol{x}_0) $ is known, one can approximate $p(\boldsymbol{y} | \boldsymbol{x}_t)$ by estimating $p(\boldsymbol{x}_0 | \boldsymbol{x}_t)$. Although the distribution of $p(\boldsymbol{x}_0 | \boldsymbol{x}_t)$ is still intractable, the conditional mean $\Tilde{\boldsymbol{x}}_0 = \mathbb{E} (\boldsymbol{x}_0 | \boldsymbol{x}_t)$ can be analytically obtained using Tweedie’s formula \cite{efron2011tweedie}. The conditional mean $\Tilde{\boldsymbol{x}}_0$ is then used by \citet{chungdiffusion} to approximate $p(\boldsymbol{x}_0 | \boldsymbol{x}_t)$ by delta distribution $\delta (\boldsymbol{x}_0 - \Tilde{\boldsymbol{x}}_0)$, and used by \citet{song2023pseudoinverse} to approximate $p(\boldsymbol{x}_0 | \boldsymbol{x}_t)$ with a Gaussian distribution $\mathcal{N}(\Tilde{\boldsymbol{x}}_0, r^2_t \bold{I})$ with a heuristically selected variance $r^2_t$.  

Nonetheless, approximating $p(\boldsymbol{x}_0 | \boldsymbol{x}_t)$ using only its first moment (mean) is prone to sub-optimal performance due to biases in reconstruction \citep{jalal2021fairness,meng2021estimating}. As a remedy, this paper aims to improve the approximation of $p(\boldsymbol{x}_0 | \boldsymbol{x}_t)$ by incorporating its second moment. Particularly, we derive a closed-form expression for the conditional covariance $\text{Cov}(\boldsymbol{x}_0 | \boldsymbol{x}_t)$ in DDPMs, and show that it depends on the Hessian $\bold{H}_t = \nabla^2_{\boldsymbol{x}_t}\log p_t(\boldsymbol{y}|\boldsymbol{x}_t)$. Yet, 
the Hessian $\bold{H}_t$ is not directly available for DDPMs, as these models only provide the score function $\nabla_{\boldsymbol{x}_t}\log p_t(\boldsymbol{y}|\boldsymbol{x}_t)$. To address this, we approximate the Hessian $ \bold{H}_t $ with a diagonal matrix $ \tilde{\bold{H}}_t $, where the diagonal elements are derived from the gradient vector $ \nabla_{\boldsymbol{x}_t} \log p_t(\boldsymbol{y} |\boldsymbol{x}_t) $ using the finite difference method. In this regard, this approximation can be easily obtained from existing pre-trained DDPMs, thus avoiding any additional complexities. Then, using $\Tilde{\bold{H}}_t$ we obtain an approximation to the true $\text{Cov}(\boldsymbol{x}_0 | \boldsymbol{x}_t)$, which we denote by $\Tilde{\boldsymbol{\Sigma}}_t$. Henceforth, we approximate $p(\boldsymbol{x}_0 | \boldsymbol{x}_t)$ with $\mathcal{N}(\Tilde{\boldsymbol{x}}_0, \Tilde{\boldsymbol{\Sigma}}_t)$. We refer to this method of deploying DDPMs for solving inverse probelms as \textbf{c}ovariance-\textbf{a}ware \textbf{d}iffusion \textbf{p}osterior \textbf{s}ampling (CA-DPS) hereafter.

It is worth noting that while some prior work has investigated using second-order approximations for the posterior, these approaches either (i) rely on the availability of second-order scores or the Jacobian of the first-order score from the diffusion model \citep{boys2023tweedie}, or (ii) require retraining existing unconditional diffusion models to output both posterior mean and variance, which increases time and memory complexity \citep{peng2024improving} (see \cref{sec:higher} for further discussion). 

The contributions of the paper are summarized as follows:

\noindent $\bullet$ For a  general exponential conditional distribution family, we derive a closed-form expression for its posterior covariance (see \cref{theorem:var}).

\noindent $\bullet$ Using \cref{theorem:var}, we determine a closed-form formula for the conditional covariance $\text{Cov}(\boldsymbol{x}_0 | \boldsymbol{x}_t)$ in DDPMs (see Corollary 1). Based on this closed-form formula,  we then introduce a method based on the finite difference approach to approximate the conditional covariance $\text{Cov}(\boldsymbol{x}_0 | \boldsymbol{x}_t)$, allowing it to be readily obtained from existing pre-trained DDPMs.

\noindent $\bullet$  By conducting experiments on two popular datasets FFHQ \cite{karras2019style} and ImageNet \citep{deng2009imagenet}, we show that CA-DPS outperforms existing approaches across various tasks, including inpainting, deblurring, and super-resolution, while also eliminating the need for hyperparameter tuning.

\section{Related Work}
\label{sec:related}

\subsection{Diffusion Models for Inverse Problems}
The use of diffusion models for solving inverse problems by sampling from the posterior has recently gained significant traction in various fields, including image denoising \citep{kawar2022denoising}, compressed sensing \citep{bora2017compressed, kadkhodaie2021stochastic}, magnetic resonance imaging (MRI) \citep{jalal2021robust}, projecting score-based stochastic differential equations (SDEs) \citep{song2022solving}, and variational approaches \citep{mardani2023variational, feng2023efficient}. In particular, the most relevant line of work, which we will review in detail in \cref{sec:setup}, involves using Tweedie's formula \citep{efron2011tweedie} to approximate the smoothed likelihood, as deployed in methods like diffusion posterior sampling (DPS) \citep{chungdiffusion} and pseudo-guided diffusion models ($\Pi$GDM) \citep{song2023pseudoinverse}. Similar strategies are also employed using singular-value decomposition (SVD) based approaches \citep{kawar2021snips}.

\subsection{Higher Order Approximation of reverse process} \label{sec:higher}
The approach presented in this paper can be seen as a variant of high-order denoising score matching~\citep{meng2021estimating, lu2022maximum}, which aims to train a diffusion model capable of learning the higher-order moments of the reverse process. However, these methods are typically limited to small-scale datasets due to their computational complexity.

Similarly to our work, \citet{boys2023tweedie} aims to estimate the covariance of the reverse process. However, their method require that the second-order scores or the Jacobian of
the first-order score be available by the diffusion model. In addition, \citet{peng2024improving} proposed a method to optimize the posterior likelihood. They proposed two methods for (i) when reverse covariance prediction is
available from the given unconditional diffusion model, and (ii) when reverse covariance prediction is not available. Their first approach is different from our proposed method as our method do not require reverse covariance to be available. Additionally, their second approach is based on Monte Carlo estimation which incurs extra complexity to the sampling process. We also acknowledge the work of \citet{stevens2023removing}, who explored a maximum-a-posteriori approach to estimate the moments of the posterior.

\section{Background and Preliminaries} \label{sec:pre}

\subsection{Diffusion Models}
Diffusion models characterize a generative process as the reverse of a noise addition process. In particular, \cite{song2020score} introduced the It$\hat{\rm{o}}$ stochastic differential equation (SDE) to describe the noise addition process (i.e., the forward SDE) for the data $\boldsymbol{x}(t)$ over the time interval $t \in [0, T]$, where $\boldsymbol{x}(t) \in \mathbb{R}^d$ for all $t$. 

In this paper, we adopt the variance-preserving form of the SDE (VP-SDE) \citep{song2020score}, which is equivalent to the DDPM framework \citep{ho2020denoising} whose equation is given as follows 
\begin{align}
\label{eq:forward-sde}
    d\boldsymbol{x} = -\frac{\beta(t)}{2} \boldsymbol{x} \, dt + \sqrt{\beta(t)}\, d\boldsymbol{w},
\end{align}
where $\beta(t): \mathbb{R} \rightarrow \mathbb{R}^{+}$ represents the noise schedule of the process, which is typically chosen as a monotonically increasing linear function of $t$~\cite{ho2020denoising}. The term $\boldsymbol{w}$ denotes the standard $d$-dimensional Wiener process. The data distribution is defined at $t=0$, i.e., $\boldsymbol{x}(0) \sim p_{\text{data}}$, while a simple and tractable distribution, such as an isotropic Gaussian, is achieved at $t=T$, i.e., $\boldsymbol{x}(T) \sim \mathcal{N} (\boldsymbol{0}, \bold{I})$.

The goal is to recover the data-generating distribution from the tractable distribution. This can be accomplished by formulating the corresponding reverse SDE for \cref{eq:forward-sde}, as derived in ~\cite{anderson1982reverse}:
\begin{align}
\label{eq:reverse-sde}
    d\boldsymbol{x} = \left[-\frac{\beta(t)}{2} \boldsymbol{x} - \beta(t)\nabla_{\boldsymbol{x}_t} \log p_t({\boldsymbol{x}_t})\right] dt + \sqrt{\beta(t)} d\bar{\boldsymbol{w}},
\end{align}
where $dt$ represents time running backward, and $d\bar{\boldsymbol{w}}$ corresponds to the standard Wiener process running in reverse. The drift function now depends on the time-dependent score function $\nabla_{\boldsymbol{x}_t} \log p_t(\boldsymbol{x}_t)$, which is approximated by a neural network $\boldsymbol{s}_\theta$ trained using denoising score matching~\cite{vincent2011connection}:
\begin{align}
\label{eq:dsm}
    \theta^* &= \arg \min_\theta \mathbb{E}_{t \sim U(\varepsilon, 1), \boldsymbol{x}(t) \sim p(\boldsymbol{x}(t)|\boldsymbol{x}(0)), \boldsymbol{x}(0) \sim p_{\text{data}}} \\
    &\left[\|\boldsymbol{s}_\theta(\boldsymbol{x}(t), t) - \nabla_{\boldsymbol{x}_t}\log p(\boldsymbol{x}(t)|\boldsymbol{x}(0))\|_2^2\right],
\end{align}
where $\varepsilon \simeq 0$ represents a small positive constant. Once the optimal parameters $\theta^*$ are obtained through \cref{eq:dsm}, the approximation $\nabla_{\boldsymbol{x}_t} \log p_t(\boldsymbol{x}_t) \simeq \boldsymbol{s}_{\theta^*}(\boldsymbol{x}_t, t)$ can be used as a plug-in estimate to replace the score function in \cref{eq:reverse-sde}.

Discretizing \cref{eq:reverse-sde} and solving it yields samples from the data distribution $p(\boldsymbol{x}_0)$, which is the ultimate goal of generative modeling. In discrete settings with $N$ time steps, we define $\boldsymbol{x}_i \triangleq \boldsymbol{x}(iT/N)$ and $\beta_i \triangleq \beta(iT/N)$. Following ~\cite{ho2020denoising}, we then introduce $\alpha_i \triangleq 1 - \beta_i$ and $\bar{\alpha}_i \triangleq \prod_{j=1}^i \alpha_j$.

\subsection{Diffusion Models for Solving Inverse Problems} \label{sec:setup}
We consider the linear inverse problems for reconstructing an unknown signal $\boldsymbol{x}_0 \in \mathbb{R}^d$ from noisy measurements $\boldsymbol{y}\in \mathbb{R}^m$:
\begin{equation}
\label{eq:forward-model}
    \boldsymbol{y} = \bold{A}\boldsymbol{x}_0 + \boldsymbol{n},
\end{equation}
where $\bold{A}\in \mathbb{R}^{m\times d}$ is a known measurement operator and $\boldsymbol{n}\sim \mathcal{N}(\boldsymbol{0}, \sigma^2 \bold{I})$ is an i.i.d. additive Gaussian noise with a known standard deviation of $\sigma$. This gives a likelihood function $p(\boldsymbol{y}|\boldsymbol{x}_0)=\mathcal{N}(\boldsymbol{y}|\bold{A} \boldsymbol{x}_0, \sigma^2 \bold{I})$.

Usually, we are interested in the case when $m < d$, which follows many real-world scenarios. When $m < d$, the problem is ill-posed and some kind of \textit{prior} is necessary to obtain a meaningful solution. In the Bayesian framework, one utilizes $p(\boldsymbol{x}_0)$ as the {\em prior}, and samples from the {\em posterior} $p(\boldsymbol{x}_0|\boldsymbol{y})$, where the relationship is formally established with the Bayes' rule: $p(\boldsymbol{x}_0|\boldsymbol{y}) = p(\boldsymbol{y}|\boldsymbol{x}_0)p(\boldsymbol{x}_0)/p(\boldsymbol{y})$. Leveraging the diffusion model as the prior, it is straightforward to modify \cref{eq:reverse-sde} to arrive at the reverse diffusion sampler for sampling from the posterior distribution:
\begin{align} \label{eq:reverse-sde-posterior}
d \boldsymbol{x} = \Big[ & -\frac{\beta(t)}{2} \boldsymbol{x}  - \beta(t)(\nabla_{\boldsymbol{x}_t} \log p_t(\boldsymbol{x}_t) \nonumber \\
& + \nabla_{\boldsymbol{x}_t} \log p_t(\boldsymbol{y}|\boldsymbol{x}_t))\Big] dt + \sqrt{\beta(t)} d\bar{\boldsymbol{w}},
\end{align}
where we have used the fact that
\begin{align}
\label{eq:grad_log_bayes}
    \nabla_{\boldsymbol{x}_t} \log p_t(\boldsymbol{x}_t|\boldsymbol{y}) = \nabla_{\boldsymbol{x}_t} \log p_t(\boldsymbol{x}_t) + \nabla_{\boldsymbol{x}_t} \log p_t(\boldsymbol{y}|\boldsymbol{x}_t).
\end{align}
In \cref{eq:reverse-sde-posterior}, there are two terms that need to be computed: the score function $\nabla_{\boldsymbol{x}_t} \log p_t(\boldsymbol{x}_t)$ and the likelihood $\nabla_{\boldsymbol{x}_t} \log p_t(\boldsymbol{y}|\boldsymbol{x}_t)$. To compute the former, involving $p_t(\boldsymbol{x}_t)$, we can directly use the pre-trained score function $\boldsymbol{s}_{\theta^*}$. However, the latter term is challenging to obtain in closed-form due to its dependence on time $t$ (note that there is only an explicit relationship between $\boldsymbol{y}$ and $\boldsymbol{x}_0$). As such, the likelihood $p_t(\boldsymbol{y}|\boldsymbol{x}_t)$ shall be estimated. One approach to achieve this estimation is to factorize $p(\boldsymbol{y}|\boldsymbol{x}_t)$ as follows:
\begin{align}
    p(\boldsymbol{y}|\boldsymbol{x}_t) &= \int p(\boldsymbol{y}|\boldsymbol{x}_0, \boldsymbol{x}_t)p(\boldsymbol{x}_0|\boldsymbol{x}_t) d\boldsymbol{x}_0 \notag \\
    &= \int p(\boldsymbol{y}|\boldsymbol{x}_0)p(\boldsymbol{x}_0|\boldsymbol{x}_t) d\boldsymbol{x}_0, 
\label{eq:factorize_yxt}
\end{align}
where the second equality comes from that $\boldsymbol{y}$ {and $\boldsymbol{x}_t$ are conditionally independent on} $\boldsymbol{x}_0$. Assuming that the measurement model $p(\boldsymbol{y}|\boldsymbol{x}_0)$ is known, based on \cref{eq:factorize_yxt}, one can approximate $p(\boldsymbol{y}|\boldsymbol{x}_t)$ by approximating $p(\boldsymbol{x}_0|\boldsymbol{x}_t)$. Although the exact form of $p(\boldsymbol{x}_0|\boldsymbol{x}_t)$ is intractable, the conditional mean of $ \boldsymbol{x}_0 $ given $\boldsymbol{x}_t $ under  $p(\boldsymbol{x}_0|\boldsymbol{x}_t)$, denoted by $\Tilde{\boldsymbol{x}}_0 = \mathbb{E} (\boldsymbol{x}_0 | \boldsymbol{x}_t)$, can be analytically obtained using Tweedie’s formula \citep{efron2011tweedie}:

\begin{proposition} [Tweedie's formula] \label{prop:Tweedie}
Let $p({\boldsymbol{y}}|{\boldsymbol{\eta}})$ belong to the exponential family distribution
\begin{align}
    p({\boldsymbol{y}}|{\boldsymbol{\eta}}) = p_0({\boldsymbol{y}}) \exp({\boldsymbol{\eta}}^\top T({\boldsymbol{y}}) - \varphi({\boldsymbol{\eta}})),
\end{align}
where ${\boldsymbol{\eta}}$ is the canonical vector of the family, $T({\boldsymbol{y}})$ is some function of $\boldsymbol{y}$, and $\varphi({\boldsymbol{\eta}})$ is the cumulant generation function which normalizes the density, and $p_0({\boldsymbol{y}})$ is the density up to the scale factor when ${\boldsymbol{\eta}} = \textbf{0}$.
Then, the posterior mean $\mathbb{E}[{\boldsymbol{\eta}}|\boldsymbol{y}]$ should satisfy
\begin{equation}\label{eq:teq}
 (\nabla_{\boldsymbol{y}} T({\boldsymbol{y}}))^\top \mathbb{E}[{\boldsymbol{\eta}}|\boldsymbol{y}] = \nabla_{\boldsymbol{y}} \log p({\boldsymbol{y}})-\nabla_{\boldsymbol{y}}\log p_0({\boldsymbol{y}}).
\end{equation}    
\end{proposition}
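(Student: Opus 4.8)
The plan is to derive \cref{eq:teq} by differentiating the marginal density $p(\boldsymbol{y}) = \int p(\boldsymbol{y}\,|\,\boldsymbol{\eta})\,\pi(\boldsymbol{\eta})\,d\boldsymbol{\eta}$ with respect to $\boldsymbol{y}$, where $\pi$ is the (implicit) prior on the canonical parameter $\boldsymbol{\eta}$ that makes $\mathbb{E}[\boldsymbol{\eta}\,|\,\boldsymbol{y}]$ meaningful, and then recognizing the posterior mean inside the resulting integral. First I would insert the exponential-family form $p(\boldsymbol{y}\,|\,\boldsymbol{\eta}) = p_0(\boldsymbol{y})\exp(\boldsymbol{\eta}^\top T(\boldsymbol{y}) - \varphi(\boldsymbol{\eta}))$ and compute the $\boldsymbol{y}$-gradient of the integrand by the product and chain rules, noting that $\varphi(\boldsymbol{\eta})$ does not depend on $\boldsymbol{y}$ and that $\nabla_{\boldsymbol{y}}(\boldsymbol{\eta}^\top T(\boldsymbol{y})) = (\nabla_{\boldsymbol{y}} T(\boldsymbol{y}))^\top \boldsymbol{\eta}$:
\begin{equation}
\nabla_{\boldsymbol{y}}\, p(\boldsymbol{y}\,|\,\boldsymbol{\eta}) = \Big(\nabla_{\boldsymbol{y}}\log p_0(\boldsymbol{y}) + (\nabla_{\boldsymbol{y}} T(\boldsymbol{y}))^\top \boldsymbol{\eta}\Big)\, p(\boldsymbol{y}\,|\,\boldsymbol{\eta}).
\end{equation}

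Next, assuming enough regularity to exchange $\nabla_{\boldsymbol{y}}$ with $\int(\cdot)\,d\boldsymbol{\eta}$, I would integrate the previous display against $\pi(\boldsymbol{\eta})$:
\begin{align}
\nabla_{\boldsymbol{y}}\, p(\boldsymbol{y}) &= \Big(\nabla_{\boldsymbol{y}}\log p_0(\boldsymbol{y})\Big)\!\int p(\boldsymbol{y}\,|\,\boldsymbol{\eta})\pi(\boldsymbol{\eta})\,d\boldsymbol{\eta} \nonumber\\
&\quad {}+ (\nabla_{\boldsymbol{y}} T(\boldsymbol{y}))^\top\!\int \boldsymbol{\eta}\, p(\boldsymbol{y}\,|\,\boldsymbol{\eta})\pi(\boldsymbol{\eta})\,d\boldsymbol{\eta}.
\end{align}
The first integral is simply $p(\boldsymbol{y})$, since the gradient factor is constant in $\boldsymbol{\eta}$; and by Bayes' rule, $p(\boldsymbol{\eta}\,|\,\boldsymbol{y}) = p(\boldsymbol{y}\,|\,\boldsymbol{\eta})\pi(\boldsymbol{\eta})/p(\boldsymbol{y})$, so the second integral equals $p(\boldsymbol{y})\,\mathbb{E}[\boldsymbol{\eta}\,|\,\boldsymbol{y}]$. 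Dividing through by $p(\boldsymbol{y})$ and using $\nabla_{\boldsymbol{y}}\log p(\boldsymbol{y}) = \nabla_{\boldsymbol{y}} p(\boldsymbol{y})/p(\boldsymbol{y})$ yields
\begin{equation}
\nabla_{\boldsymbol{y}}\log p(\boldsymbol{y}) = \nabla_{\boldsymbol{y}}\log p_0(\boldsymbol{y}) + (\nabla_{\boldsymbol{y}} T(\boldsymbol{y}))^\top \mathbb{E}[\boldsymbol{\eta}\,|\,\boldsymbol{y}],
\end{equation}
which is a rearrangement of \cref{eq:teq}.

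The only delicate step is the interchange of differentiation and integration, which I would justify via the dominated convergence theorem (equivalently, the Leibniz integral rule) under the mild assumption that $\sup$ over a neighborhood of $\boldsymbol{y}$ of $\|\nabla_{\boldsymbol{y}} p(\boldsymbol{y}\,|\,\boldsymbol{\eta})\|\,\pi(\boldsymbol{\eta})$ is integrable in $\boldsymbol{\eta}$ — a condition readily met by the Gaussian likelihoods relevant to the diffusion setting. A secondary bookkeeping point is to fix the Jacobian convention, e.g.\ declaring $\nabla_{\boldsymbol{y}} T(\boldsymbol{y})$ to have $(i,j)$ entry $\partial T_i/\partial y_j$, so that $(\nabla_{\boldsymbol{y}} T(\boldsymbol{y}))^\top \boldsymbol{\eta}$ is a vector of the same dimension as $\boldsymbol{y}$. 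Beyond these, the argument is just the product rule together with the definition of conditional expectation, so I anticipate no substantial obstacle.
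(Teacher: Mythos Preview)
Your proposal is correct and follows essentially the same route as the paper: differentiate the marginal $p(\boldsymbol{y})=\int p_0(\boldsymbol{y})\exp(\boldsymbol{\eta}^\top T(\boldsymbol{y})-\varphi(\boldsymbol{\eta}))\,p(\boldsymbol{\eta})\,d\boldsymbol{\eta}$ in $\boldsymbol{y}$, split via the product rule into a $\nabla_{\boldsymbol{y}}p_0$ term and a $(\nabla_{\boldsymbol{y}}T)^\top\boldsymbol{\eta}$ term, recognize the latter integral as $p(\boldsymbol{y})\,\mathbb{E}[\boldsymbol{\eta}\,|\,\boldsymbol{y}]$ via Bayes' rule, and divide by $p(\boldsymbol{y})$. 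The only difference is cosmetic (you first compute $\nabla_{\boldsymbol{y}}p(\boldsymbol{y}\,|\,\boldsymbol{\eta})$ and then integrate, whereas the paper differentiates under the integral directly), and your added remarks on dominated convergence and Jacobian conventions are welcome but not present in the paper's version.
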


Now, note that in DDPM sampling process we have
\begin{align}
p(\boldsymbol{x}_t|\boldsymbol{x_0}) = \frac{1}{(2\pi(1-\bar\alpha(t)))^{d/2}}\exp\left(-\frac{\|\boldsymbol{x}_t-\sqrt{\bar\alpha(t)}\boldsymbol{x_0}\|^2}{2(1-\bar\alpha(t))}\right),
\end{align}
which is a Gaussian distribution.
The corresponding canonical decomposition is then given by
\begin{align}
p(\boldsymbol{x}_t|\boldsymbol{x_0}) = p_0(\boldsymbol{x}_t) \exp\left(\boldsymbol{x_0}^\top T(\boldsymbol{x}_t)- \varphi(\boldsymbol{x_0}) \right),
\end{align}
where
\begin{subequations} \label{eq:DDPMeq}
\begin{align}
p_0(\boldsymbol{x}_t)&:= \frac{1}{(2\pi(1-\bar\alpha(t)))^{d/2}}\exp\left(-\frac{\|\boldsymbol{x}_t\|^2}{2(1-\bar\alpha(t))}\right),\\
T(\boldsymbol{x}_t) &:=\frac{\sqrt{\bar\alpha(t)}}{1-\bar\alpha(t)}\boldsymbol{x}_t,\\
\varphi(\boldsymbol{x_0})&:=\frac{\bar\alpha(t)\|\boldsymbol{x_0}\|^2}{2(1-\bar\alpha(t))}.
\end{align}
\end{subequations}

Therefore, substituting \cref{eq:DDPMeq} in \cref{eq:teq}, we obtain
\begin{align*}
\frac{\sqrt{\bar\alpha(t)}}{1-\bar\alpha(t)} \Tilde{\boldsymbol{x}}_0 = \nabla_{\boldsymbol{x}_t}\log p_t(\boldsymbol{x}_t) +\frac{1}{1-\bar\alpha(t)}\boldsymbol{x}_t,
\end{align*}
which leads to 
\begin{align}
\label{eq:tr}
\Tilde{\boldsymbol{x}}_0 = \frac{1}{\sqrt{\bar\alpha(t)}}\left(\boldsymbol{x}_t+(1-\bar\alpha(t))\nabla_{\boldsymbol{x}_t}\log p_t(\boldsymbol{x}_t)\right). 
\end{align}

Then, two recent studies deploy the expected value $\Tilde{\boldsymbol{x}}_0$ to approximate $p(\boldsymbol{x}_0|\boldsymbol{x}_t)$, which we discuss their methodologies in detail in the following.

\noindent \textbf{(I) DPS~\cite{chungdiffusion}} Denoising posterior sampling (DPS) approximates $p_{t}(\boldsymbol{x}_0|\boldsymbol{x}_t)$ using a delta distribution $\delta(\boldsymbol{x}_0-\Tilde{\boldsymbol{x}}_0)$ centered at the posterior mean estimate $\Tilde{\boldsymbol{x}}_0$. As such, the likelihood $p_t(\boldsymbol{y}|\boldsymbol{x}_t)$ is approximated by
\begin{align}
p_t(\boldsymbol{y}|\boldsymbol{x}_t) &\approx \int p(\boldsymbol{y}|\boldsymbol{x}_0)\delta(\boldsymbol{x}_0-\Tilde{\boldsymbol{x}}_0) \mathrm{d}\boldsymbol{x}_0 \nonumber\\
&= p(\boldsymbol{y}|\boldsymbol{x}_0=\Tilde{\boldsymbol{x}}_0). \label{eq:dps}
\end{align}
However, directly using Eq.~\cref{eq:dps} does not perform well in practice, and \citet{chungdiffusion} empirically adjusts the strength of guidance by approximating the likelihood score $\nabla_{\boldsymbol{x}_t}\log p_t(\boldsymbol{y}|\boldsymbol{x}_t)$ with $-\zeta_t \nabla_{\boldsymbol{x}_t} \lVert \boldsymbol{y} - \bold{A} \Tilde{\boldsymbol{x}}_0\rVert_2^2$, where $\zeta_t = \zeta / \lVert \boldsymbol{y} - \bold{A} \Tilde{\boldsymbol{x}}_0\rVert_2$ with a hyper-parameter $\zeta$. 

\noindent \textbf{(II) $\Pi\text{GDM}$~\cite{song2023pseudoinverse}} The delta distribution used in DPS is a very rough approximation to $p_t(\boldsymbol{x}_0|\boldsymbol{x}_t)$ as it completely disregards the uncertainty of $\boldsymbol{x}_0$ given $\boldsymbol{x}_t$. 
As $t$ increases, the uncertainty in $p_t(\boldsymbol{x}_0|\boldsymbol{x}_t)$ becomes larger and is closed to the original data distribution $p(\boldsymbol{x}_0)$. Thus, it is more reasonable to choose a positive $r_t$.
% For a large $t$, $p_t(\boldsymbol{x}_0|\boldsymbol{x}_t)$ is closed to the original data distribution which possesses large uncertainty, and thus setting $r_t>0$ could be a more reasonable choice. 
In $\Pi\text{GDM}$, $r_t$ is heuristically selected as $\sqrt{\sigma_t^2/(1 + \sigma_t^2)}$ under the assumption that $p(\boldsymbol{x}_0)$ is the standard normal distribution $\mathcal{N}(\boldsymbol{0}, \bold{I})$. In such case, the likelihood $p_t(\boldsymbol{y}|\boldsymbol{x}_t)$ is approximated by
\begin{align}
p_t(\boldsymbol{y} |\boldsymbol{x}_t) &\approx\int\mathcal{N}(\boldsymbol{y}|\bold{A} \boldsymbol{x}_0, \sigma^2 \bold{I}) \mathcal{N}(\boldsymbol{x}_0|\Tilde{\boldsymbol{x}}_0, r_t^2 \bold{I}) \mathrm{d}\boldsymbol{x}_0\nonumber\\
&= \mathcal{N}(\boldsymbol{y}|\bold{A}\Tilde{\boldsymbol{x}}_0, \sigma^2 \bold{I} + r_t^2\bold{A} \bold{A}^\top). \label{eq:pgdm-likelihood}
\end{align}

\section{Covariance-Aware Diffusion
Posterior Sampling} \label{sec:meth}
In this section, we aim to improve the approximation of the reverse process $ p(\boldsymbol{x}_0 | \boldsymbol{x}_t) $ compared to DPS and $ \Pi $GDM. Specifically, instead of heuristically approximating the conditional covariance of $ \boldsymbol{x}_0 $ given $ \boldsymbol{x}_t $ as done by $ \Pi $GDM, we derive a closed-form formula for it. To this end, we first introduce the following theorem.

\begin{theorem} \label{theorem:var}
Under the same conditions as in \cref{prop:Tweedie}, the posterior covariance  $\text{Cov} ({\boldsymbol{\eta}}|\boldsymbol{y})$  satisfies
\begin{align}
(\nabla_{\boldsymbol{y}} & T({\boldsymbol{y}}))^\top   \text{Cov}({\boldsymbol{\eta}}|\boldsymbol{y}) \nabla_{\boldsymbol{y}} T({\boldsymbol{y}}) \nonumber \\
& =  \nabla^2_{\boldsymbol{y}} \log p({\boldsymbol{y}}) - \nabla^2_{\boldsymbol{y}} \log p_0({\boldsymbol{y}}) - \nabla^2_{\boldsymbol{y}} T({\boldsymbol{y}}) \odot \mathbb{E}({\boldsymbol{\eta}}|\boldsymbol{y}), \label{eq:var}  
\end{align}
where $\mathbb{E}({\boldsymbol{\eta}}|\boldsymbol{y})$ is obtained using Tweedie’s formula in \cref{prop:Tweedie}. Additionally, the operator $\odot$ denotes a contraction operation between the three dimensional tensor $\nabla^2_{\boldsymbol{y}} T({\boldsymbol{y}})$ and the vector $\mathbb{E}({\boldsymbol{\eta}}|\boldsymbol{y})$. Specifically, assuming that $\boldsymbol{y} \in \mathbb{R}^r$ and $\boldsymbol{\eta} \in \mathbb{R}^k$ (which yields \(\nabla^2_{\boldsymbol{y}} T({\boldsymbol{y}}) \in \mathbb{R}^{r \times r \times k}\) and \(\mathbb{E}[{\boldsymbol{\eta}}|\boldsymbol{y}] \in \mathbb{R}^k\)), then $\nabla^2_{\boldsymbol{y}} T({\boldsymbol{y}}) \odot \mathbb{E}({\boldsymbol{\eta}}|\boldsymbol{y}) \in \mathbb{R}^{r\times r}$ is defined as
\begin{align}
[\nabla^2_{\boldsymbol{y}} T({\boldsymbol{y}}) \odot \mathbb{E}({\boldsymbol{\eta}}|\boldsymbol{y})]_{ij} = \sum_{k} [\nabla^2_{\boldsymbol{y}} T({\boldsymbol{y}})]_{ijk} \mathbb{E}({\eta_k}|\boldsymbol{y}).    
\end{align}  
\end{theorem}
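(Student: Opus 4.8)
The plan is to work with the single scalar function $g(\boldsymbol{y}) := \log p(\boldsymbol{y}) - \log p_0(\boldsymbol{y})$ and compute its first two $\boldsymbol{y}$-derivatives directly. Writing the marginal as a mixture, $p(\boldsymbol{y}) = \int p(\boldsymbol{y}|\boldsymbol{\eta})\,p(\boldsymbol{\eta})\,d\boldsymbol{\eta} = p_0(\boldsymbol{y})\int \exp(\boldsymbol{\eta}^\top T(\boldsymbol{y}) - \varphi(\boldsymbol{\eta}))\,p(\boldsymbol{\eta})\,d\boldsymbol{\eta}$, I would set $Z(\boldsymbol{y}) := \int \exp(\boldsymbol{\eta}^\top T(\boldsymbol{y}) - \varphi(\boldsymbol{\eta}))\,p(\boldsymbol{\eta})\,d\boldsymbol{\eta}$ so that $g(\boldsymbol{y}) = \log Z(\boldsymbol{y})$. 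The key structural fact is that $Z(\boldsymbol{y})$ is precisely the normalizing constant of the posterior, $p(\boldsymbol{\eta}|\boldsymbol{y}) = \exp(\boldsymbol{\eta}^\top T(\boldsymbol{y}) - \varphi(\boldsymbol{\eta}))\,p(\boldsymbol{\eta})/Z(\boldsymbol{y})$; consequently, whenever we differentiate $Z$ under the integral sign and divide by $Z$, the leftover integral is exactly a posterior moment $\mathbb{E}[\,\cdot\,|\boldsymbol{y}]$.

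Differentiating once (note $\varphi(\boldsymbol{\eta})$ is inert, being $\boldsymbol{y}$-independent) gives $\partial_{y_i} g = Z^{-1}\partial_{y_i} Z = \sum_k (\partial_{y_i} T_k)\,\mathbb{E}[\eta_k|\boldsymbol{y}]$, i.e.\ $\nabla_{\boldsymbol{y}} g = (\nabla_{\boldsymbol{y}} T(\boldsymbol{y}))^\top \mathbb{E}[\boldsymbol{\eta}|\boldsymbol{y}]$, which is \cref{prop:Tweedie} and fixes the Jacobian conventions. Differentiating a second time, I would use $\partial_{y_j}\partial_{y_i} g = Z^{-1}\partial_{y_j}\partial_{y_i} Z - (Z^{-1}\partial_{y_i} Z)(Z^{-1}\partial_{y_j} Z)$. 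Expanding $\partial_{y_j}\partial_{y_i} Z$ yields two families of terms: one in which the second derivative falls on $T$, producing $\sum_k (\partial_{y_i}\partial_{y_j} T_k)\,\mathbb{E}[\eta_k|\boldsymbol{y}]$, and one in which each differentiation brings down a factor $\sum_k \eta_k \partial T_k$, producing $\sum_{k,l}(\partial_{y_i} T_k)(\partial_{y_j} T_l)\,\mathbb{E}[\eta_k\eta_l|\boldsymbol{y}]$. Subtracting the outer product $(\partial_{y_i} g)(\partial_{y_j} g) = \sum_{k,l}(\partial_{y_i} T_k)(\partial_{y_j} T_l)\,\mathbb{E}[\eta_k|\boldsymbol{y}]\,\mathbb{E}[\eta_l|\boldsymbol{y}]$ turns the second-moment family into the posterior covariance $\text{Cov}(\eta_k,\eta_l|\boldsymbol{y})$, since $\mathbb{E}[\eta_k\eta_l|\boldsymbol{y}] - \mathbb{E}[\eta_k|\boldsymbol{y}]\mathbb{E}[\eta_l|\boldsymbol{y}] = \text{Cov}(\eta_k,\eta_l|\boldsymbol{y})$.

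Re-assembling these index identities in matrix/tensor form gives $\nabla^2_{\boldsymbol{y}}\log p(\boldsymbol{y}) - \nabla^2_{\boldsymbol{y}}\log p_0(\boldsymbol{y}) = (\nabla_{\boldsymbol{y}} T)^\top \text{Cov}(\boldsymbol{\eta}|\boldsymbol{y})\,\nabla_{\boldsymbol{y}} T + \nabla^2_{\boldsymbol{y}} T \odot \mathbb{E}[\boldsymbol{\eta}|\boldsymbol{y}]$, where the contraction $\odot$ appears exactly because $[\nabla^2_{\boldsymbol{y}} T]_{ijk} = \partial_{y_i}\partial_{y_j} T_k$ is summed against $\mathbb{E}[\eta_k|\boldsymbol{y}]$ as in the definition in the statement; moving the last term to the left reproduces \cref{eq:var}. (When $T$ is affine in $\boldsymbol{y}$, as in the DDPM instantiation of \cref{eq:DDPMeq}, the $\odot$ term vanishes, which is a useful consistency check.)

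I expect the main obstacle to be \emph{bookkeeping} rather than conceptual depth: one must carefully match the two index patterns — the $\partial^2 T$ term versus the product-of-first-derivatives term — to the paper's matrix conventions (which Jacobian is transposed, and that $\nabla_{\boldsymbol{y}} T$ sits on both sides of $\text{Cov}(\boldsymbol{\eta}|\boldsymbol{y})$), and verify cleanly that the second-moment term minus the outer product of first moments is exactly $\text{Cov}(\boldsymbol{\eta}|\boldsymbol{y})$. A secondary point to state explicitly is the regularity condition under which differentiation under the integral sign is valid and the relevant posterior second moments are finite; this is routine in the exponential-family setting but should be invoked.
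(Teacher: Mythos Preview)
Your proposal is correct and follows the same core strategy as the paper: differentiate the marginal twice with respect to $\boldsymbol{y}$, recognize that division by the normalizer converts integrals into posterior moments, and subtract the outer product of first derivatives to turn the second moment into the covariance. The one organizational difference is that you work directly with $g(\boldsymbol{y})=\log Z(\boldsymbol{y})=\log p(\boldsymbol{y})-\log p_0(\boldsymbol{y})$, whereas the paper differentiates $p(\boldsymbol{y})$ itself (carrying $p_0$ along), divides by $p(\boldsymbol{y})$, and only at the end collapses the resulting $\nabla p_0/p_0$ and $\nabla p/p$ cross terms into log-Hessians via $\nabla^2\log f = f^{-1}\nabla^2 f - (f^{-1}\nabla f)(f^{-1}\nabla f)^\top$. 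Your route buys a noticeably shorter computation with fewer intermediate terms to track; the paper's route is more explicit about where each $p_0$ contribution goes but requires more algebra before the final simplification. Both arrive at \cref{eq:var} by the same mechanism.
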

\begin{proof}
Please refer to the \textit{\textit{Supplementary}} materials.    
\end{proof}
Next, we use \cref{theorem:var} for DDPMs to find a closed-form expression for the conditional covariance $\text{Cov} (\boldsymbol{x_0} | \boldsymbol{x}_t)$.

\begin{corollary} \label{cor:DDPM}
Using \cref{eq:DDPMeq} in \cref{theorem:var} we obtain
\begin{align}
\big( \frac{\sqrt{\bar\alpha(t)}}{1-\bar\alpha(t)} \big)^2 \text{Cov} (\boldsymbol{x_0} | \boldsymbol{x}_t) &= \nabla^2_{\boldsymbol{x}_t}\log p_t(\boldsymbol{x}_t) + \frac{1}{1-\bar\alpha(t)} \bold{I},
\end{align}
which leads to
\begin{align} \label{eq:varlast}
\text{Cov} (\boldsymbol{x_0} | \boldsymbol{x}_t) = \frac{1-\bar\alpha(t)}{\bar\alpha(t)} \Big( \bold{I} + (1-\bar\alpha(t)) \nabla^2_{\boldsymbol{x}_t}\log p_t(\boldsymbol{x}_t) \Big). 
\end{align}    
\end{corollary}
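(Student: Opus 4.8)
The plan is to specialize \cref{theorem:var} to the Gaussian forward kernel of the DDPM, whose canonical quantities $p_0$, $T$, and $\varphi$ are already recorded in \cref{eq:DDPMeq}. Concretely, I would identify $\boldsymbol{y}\leftrightarrow\boldsymbol{x}_t$ and $\boldsymbol{\eta}\leftrightarrow\boldsymbol{x}_0$, so that $r=k=d$, the marginal $p(\boldsymbol{y})$ appearing in the hypotheses becomes the diffusion marginal $p_t(\boldsymbol{x}_t)=\int p(\boldsymbol{x}_t|\boldsymbol{x}_0)\,p(\boldsymbol{x}_0)\,d\boldsymbol{x}_0$, and the posterior $p(\boldsymbol{\eta}|\boldsymbol{y})$ becomes $p(\boldsymbol{x}_0|\boldsymbol{x}_t)$, whose covariance is the target object.

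First I would compute the derivative quantities that appear on both sides of \cref{eq:var}. Since $T(\boldsymbol{x}_t)=\tfrac{\sqrt{\bar\alpha(t)}}{1-\bar\alpha(t)}\,\boldsymbol{x}_t$ is affine in $\boldsymbol{x}_t$, its Jacobian is the scalar multiple of the identity $\nabla_{\boldsymbol{x}_t}T(\boldsymbol{x}_t)=\tfrac{\sqrt{\bar\alpha(t)}}{1-\bar\alpha(t)}\bold{I}$, and --- the key simplification --- its tensor of second derivatives vanishes, $\nabla^2_{\boldsymbol{x}_t}T(\boldsymbol{x}_t)=\boldsymbol{0}$. Hence the contraction term $\nabla^2_{\boldsymbol{x}_t}T(\boldsymbol{x}_t)\odot\mathbb{E}(\boldsymbol{x}_0|\boldsymbol{x}_t)$ drops out of \cref{eq:var}, which is why Tweedie's mean estimate $\Tilde{\boldsymbol{x}}_0$ does not enter the final covariance formula. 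Next, from the expression for $p_0$ in \cref{eq:DDPMeq} one gets $\nabla_{\boldsymbol{x}_t}\log p_0(\boldsymbol{x}_t)=-\tfrac{1}{1-\bar\alpha(t)}\boldsymbol{x}_t$ and therefore $\nabla^2_{\boldsymbol{x}_t}\log p_0(\boldsymbol{x}_t)=-\tfrac{1}{1-\bar\alpha(t)}\bold{I}$, while $\nabla^2_{\boldsymbol{x}_t}\log p(\boldsymbol{x}_t)$ is simply $\nabla^2_{\boldsymbol{x}_t}\log p_t(\boldsymbol{x}_t)$ under the above identification.

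Substituting these into \cref{eq:var}, the left-hand side quadratic form collapses (because $\nabla_{\boldsymbol{x}_t}T$ is a scalar times the identity) to $\big(\tfrac{\sqrt{\bar\alpha(t)}}{1-\bar\alpha(t)}\big)^2\,\text{Cov}(\boldsymbol{x}_0|\boldsymbol{x}_t)$, and the right-hand side reduces to $\nabla^2_{\boldsymbol{x}_t}\log p_t(\boldsymbol{x}_t)+\tfrac{1}{1-\bar\alpha(t)}\bold{I}$, which is exactly the first displayed identity of the corollary. Then I would multiply both sides by $\big(\tfrac{1-\bar\alpha(t)}{\sqrt{\bar\alpha(t)}}\big)^2=\tfrac{(1-\bar\alpha(t))^2}{\bar\alpha(t)}$ and factor out $\tfrac{1-\bar\alpha(t)}{\bar\alpha(t)}$ to arrive at \cref{eq:varlast}.

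There is no genuine analytic obstacle here: the entire argument is a direct substitution into \cref{theorem:var}. The only points that need a bit of care are (i) noting that $T$ is affine so that $\nabla^2_{\boldsymbol{x}_t}T=\boldsymbol{0}$ and the $\odot$-term truly vanishes, (ii) checking that the marginal $p(\boldsymbol{y})$ assumed in \cref{prop:Tweedie} and \cref{theorem:var} coincides with $p_t(\boldsymbol{x}_t)$ --- immediate from the definition of $p_t$ as the marginal of the forward process --- and (iii) the bookkeeping that the three-tensor convention of \cref{theorem:var} degenerates consistently when $r=k=d$ and $\nabla^2_{\boldsymbol{x}_t}T$ is identically zero, so that the left-hand side really is the claimed scalar multiple of $\text{Cov}(\boldsymbol{x}_0|\boldsymbol{x}_t)$.
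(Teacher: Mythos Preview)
Your proposal is correct and is exactly the approach the paper takes: the corollary is presented in the paper as an immediate specialization of \cref{theorem:var} to the canonical quantities in \cref{eq:DDPMeq}, with no separate proof given beyond ``Using \cref{eq:DDPMeq} in \cref{theorem:var} we obtain\ldots''. Your explicit observations that $\nabla^2_{\boldsymbol{x}_t}T=\boldsymbol{0}$ (so the $\odot$-term vanishes) and that $\nabla^2_{\boldsymbol{x}_t}\log p_0(\boldsymbol{x}_t)=-\tfrac{1}{1-\bar\alpha(t)}\bold{I}$ are precisely the two substitutions needed, and the final algebraic rearrangement to \cref{eq:varlast} is routine.
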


As seen in \cref{eq:varlast}, $\text{Cov} (\boldsymbol{x_0} | \boldsymbol{x}_t)$ depends on the  Hessian $\bold{H}_t = \nabla^2_{\boldsymbol{x}_t}\log p_t(\boldsymbol{y}|\boldsymbol{x}_t)$. Nevertheless, 
the Hessian $\bold{H}_t$ is not available for DDPMs (note that DDPMs only return the score function $\nabla_{\boldsymbol{x}_t}\log p_t(\boldsymbol{y}|\boldsymbol{x}_t)$). To this aim, we shall resort to approximating $\bold{H}_t$. Given that the sampling process in DDPMs is inherently time-consuming, our approximation method for $\bold{H}_t$ needs to be straightforward to avoid adding complexity to the sampling process. Thus, we approximate the Hessian $ \bold{H}_t $ with a diagonal matrix $ \tilde{\bold{H}}_t $, where the diagonal elements are obtained from the gradient vector $ \nabla_{\boldsymbol{x}_t} \log p_t(\boldsymbol{y} |\boldsymbol{x}_t) $ using the finite difference method:  
\begin{align}
\Tilde{\bold{H}}_t =
\frac{\nabla_{\boldsymbol{x}_t}\log p_t(\boldsymbol{x}_t) - \nabla_{\boldsymbol{x}_{t-1}}\log p_{t-1}(\boldsymbol{x}_{t-1})}{t - (t-1)}.
\end{align}
In this respect, $\Tilde{\bold{H}}_t$ can be readily obtained from the existing pre-trained DDPMs, and thereby does not incur any extra complexities. Then, using $\Tilde{\bold{H}}_t$ in \cref{eq:varlast}, we obtain an approximation to the true $\text{Cov}(\boldsymbol{x}_0 | \boldsymbol{x}_t)$ as 
\begin{align} \label{eq:covapprox}
\Tilde{\boldsymbol{\Sigma}}_t = \frac{1-\bar\alpha(t)}{\bar\alpha(t)} \Big( \bold{I} + (1-\bar\alpha(t)) \Tilde{\bold{H}}_t \Big).    
\end{align}

Henceforth, we approximate $p_{t}(\boldsymbol{x}_0|\boldsymbol{x}_t)$ as follows
\begin{align}
p_{t}(\boldsymbol{x}_0|\boldsymbol{x}_t) \approx \mathcal{N} (\Tilde{\boldsymbol{x}}_0, \Tilde{\boldsymbol{\Sigma}}_t),    
\end{align}
which leads to the following approximation for the likelihood $p_t(\boldsymbol{y} |\boldsymbol{x}_t)$
\begin{align}
p_t(\boldsymbol{y} |\boldsymbol{x}_t) &\approx\int\mathcal{N}(\bold{A} \boldsymbol{x}_0, \sigma^2 \bold{I}) \mathcal{N}(\Tilde{\boldsymbol{x}}_0, \Tilde{\boldsymbol{\Sigma}}_t) \mathrm{d}\boldsymbol{x}_0\nonumber\\
&= \mathcal{N}(\bold{A}\Tilde{\boldsymbol{x}}_0, \sigma^2 \bold{I} +  \bold{A} \Tilde{\boldsymbol{\Sigma}}_t \bold{A}^\top). \label{eq:our-likelihood}
\end{align}

Lastly, the gradient of the log-likelihood, $ \nabla_{\boldsymbol{x}_t} \log p_t(\boldsymbol{y} |\boldsymbol{x}_t) $, can be approximated using the Jacobian-vector product, similar to the approach in \cite{song2023pseudoinverse}:
\begin{align}
& \nabla_{\boldsymbol{x}_t} \log p_t(\boldsymbol{y}|\boldsymbol{x}_t)  \nonumber \\
& \approx  \big( \nabla_{\boldsymbol{x}_t} \Tilde{\boldsymbol{x}}_0   \big) \bold{A}^\top \big( \sigma^2 \bold{I} + \bold{A} \Tilde{\boldsymbol{\Sigma}}_t \bold{A}^\top \big)^{-1} \big( \boldsymbol{y} - \bold{A} \Tilde{\boldsymbol{x}}_0 \big). \label{eq:loglike} 
\end{align}
To find the term $\big( \nabla_{\boldsymbol{x}_t} \Tilde{\boldsymbol{x}}_0 \big)$ in \cref{eq:loglike}, we take a derivative w.r.t. $\boldsymbol{x}_t$ from both sides of \cref{eq:tr} to obtain
\begin{align}
\nabla_{\boldsymbol{x}_t} & \Tilde{\boldsymbol{x}}_0 =   \frac{1}{\sqrt{\bar\alpha(t)}}\left(\bold{I}+(1-\bar\alpha(t))\nabla^2_{\boldsymbol{x}_t}\log p_t(\boldsymbol{x}_t)\right)  \\
& \approx   \frac{1}{\sqrt{\bar\alpha(t)}}\left(\bold{I}+(1-\bar\alpha(t)) \Tilde{\bold{H}}_t \right) \\
& = \frac{\sqrt{\bar\alpha(t)}}{1-\bar\alpha(t)} \Tilde{\boldsymbol{\Sigma}}_t. \label{eq:derx0}
\end{align}
Thus, using \cref{eq:derx0} in \cref{eq:loglike} we obtain
\begin{align}
& \nabla_{\boldsymbol{x}_t} \log p_t(\boldsymbol{y}|\boldsymbol{x}_t)  \nonumber \\
& \approx  \frac{\sqrt{\bar\alpha(t)}}{1-\bar\alpha(t)} \Tilde{\boldsymbol{\Sigma}}_t  \bold{A}^\top \big( \sigma^2 \bold{I} + \bold{A} \Tilde{\boldsymbol{\Sigma}}_t \bold{A}^\top \big)^{-1} \big( \boldsymbol{y} - \bold{A} \Tilde{\boldsymbol{x}}_0 \big). \label{eq:loglike2} 
\end{align}

Note that in general, computing the inverse $\big( \sigma^2 \bold{I} + \bold{A} \Tilde{\boldsymbol{\Sigma}}_t \bold{A}^\top \big)^{-1}$ for high-dimensional images is computationally challenging. To this end, we use a low complexity method to find this inverse. In particular, we first define $\boldsymbol{\lambda} = \big( \sigma^2 \bold{I} + \bold{A} \Tilde{\boldsymbol{\Sigma}}_t \bold{A}^\top \big)^{-1} \big( \boldsymbol{y} - \bold{A} \Tilde{\boldsymbol{x}}_0 \big)$. As the matrix  $\sigma^2 \bold{I} + \bold{A} \Tilde{\boldsymbol{\Sigma}}_t \bold{A}^\top$ is is symmetric and positive-definite, then $\boldsymbol{\lambda}$ can be obtained by solving the following linear equation
\begin{align}
\big( \sigma^2 \bold{I} + \bold{A} \Tilde{\boldsymbol{\Sigma}}_t \bold{A}^\top \big) \boldsymbol{\lambda} = \boldsymbol{y} - \bold{A} \Tilde{\boldsymbol{x}}_0.     
\end{align}
Therefore, $\boldsymbol{\lambda}$ can be computed with acceptable precision using a sufficient number of conjugate gradient (CG) iterates\footnote{In the experiments, we utilize the black-box CG method implemented in `scipy.sparse.linalg.cg`, with a tolerance of $ \text{tol}=1\text{e}{-4} $.}.

Once $ \nabla_{\boldsymbol{x}_t} \log p_t(\boldsymbol{y} \mid \boldsymbol{x}_t) $ is computed, the gradient of the posterior in \cref{eq:grad_log_bayes} can be calculated and used during the reverse denoising pass. This will slightly modify the unconditional DDPMs reverse pass as shown in \cref{alg:CA-DPS}. Note that the only difference of CA-DPS algorithm with an unconditional sampling is in line 8 (written in blue), where the conditioning is applied.

\begin{algorithm}[H]
   \caption{CA-DPS}
   \label{alg:CA-DPS}
\begin{algorithmic}[1]
   \STATE   {\bfseries Input:} The number of iterations $N$, $\boldsymbol{y}$, noise levels $\{ \Tilde{\sigma} \}$. 
   \STATE $\boldsymbol{x}_N \sim \mathcal{N} (\bold{0}, \bold{I})$
   \FOR{$t=N-1, N-2,\dots,0$}
   \STATE  $\hat{\boldsymbol{s}} \leftarrow \boldsymbol{s}_{\theta} (\boldsymbol{x}_t,t)$ 
   \STATE $ \Tilde{\boldsymbol{x}}_0 \leftarrow \frac{1}{\sqrt{\bar\alpha_t}}\left(\boldsymbol{x}_t+(1-\bar\alpha_t)\hat{\boldsymbol{s}} \right)$
   \STATE $\boldsymbol{z} \sim \mathcal{N} (\bold{0}, \bold{I})$.
   \STATE  $\boldsymbol{x}_{t-1}^\prime \leftarrow \frac{\sqrt{\alpha_t}(1-\bar{\alpha}_{t-1})}{1 - \bar{\alpha}_t}\boldsymbol{x}_t + \frac{\sqrt{\bar{\alpha}_{t-1}}\beta_t}{1 - \bar{\alpha}_t}\Tilde{\boldsymbol{x}}_0 +  {\tilde{\sigma}_t \boldsymbol{z}}$.
   \STATE  \color{blue}  
   \small $\boldsymbol{x}_{t-1} \leftarrow \boldsymbol{x}_{t-1}^\prime - \frac{\sqrt{\bar\alpha_t}}{1-\bar\alpha_t} \Tilde{\boldsymbol{\Sigma}}_t \bold{A}^\top \big( \sigma^2 \bold{I} + \bold{A} \Tilde{\boldsymbol{\Sigma}}_t \bold{A}^\top \big)^{-1} \big( \boldsymbol{y} - \bold{A} \Tilde{\boldsymbol{x}}_0 \big)$. \normalsize \color{black}
   \ENDFOR
   \STATE  {\bfseries Output:} $\boldsymbol{x}_0$
\end{algorithmic}
\end{algorithm}

\begin{table*}[!t]
\renewcommand{\baselinestretch}{1.0}
\renewcommand{\arraystretch}{1.0}
\setlength{\tabcolsep}{2.2pt}
\centering
\resizebox{1\textwidth}{!}{\begin{tabular}{@{}c|cccccccccccccccc@{}}
\toprule
\multirow{2}{*}{\textbf{Dataset}} & \multirow{2}{*}{\textbf{Method}}      & \multicolumn{3}{c}{\textbf{Inpaint~(Random)}} & \multicolumn{3}{c}{\textbf{Inpaint~(Box)}} & \multicolumn{3}{c}{\textbf{Deblur~(Gaussian)}} & \multicolumn{3}{c}{\textbf{Deblur~(Motion)}} & \multicolumn{3}{c}{\textbf{SR~($4\times$)}} \\
\cmidrule(lr){3-5}  \cmidrule(lr){6-8}  \cmidrule(lr){9-11}  \cmidrule(lr){12-14} \cmidrule(lr){15-17}
&      & FID~$\downarrow$ & LPIPS~$\downarrow$ & SSIM~$\uparrow$ & FID~$\downarrow$ & LPIPS~$\downarrow$ & SSIM~$\uparrow$ & FID~$\downarrow$ & LPIPS~$\downarrow$ & SSIM~$\uparrow$ & FID~$\downarrow$ & LPIPS~$\downarrow$ & SSIM~$\uparrow$ & FID~$\downarrow$ & LPIPS~$\downarrow$ & SSIM~$\uparrow$  \\
\midrule
\multirow{7}{*}{FFHQ}  & DPS  & \underline{21.19} & \underline{0.212} & \underline{0.851} & \underline{33.12} & \underline{0.168}& \textbf{0.873} & 44.05 & 0.257 &0.811 & 39.92 & 0.242 & 0.859 & 39.35 & 0.214 & 0.852 \\ & $\Pi$GDM & 21.27 & 0.221 & 0.840 & 34.79 & 0.179& 0.860 & \underline{40.21} & \underline{0.242}&0.825 & \underline{33.24} & \underline{0.221} & \underline{0.887} & \underline{34.98} & \underline{0.202} & 0.854  \\
& DDRM & 69.71 & 0.587 & 0.319 & 42.93 &0.204& \underline{0.869}& 74.92 &0.332& 0.767 & $-$ & $-$ & $-$ &62.15 &0.294 &0.835\\
& MCG & 29.26 & 0.286 &  0.751 & 40.11 &0.309& 0.703&  101.2 &0.340&0.051& $-$ & $-$ & $-$ &87.64 &0.520  &0.559\\ 
& PnP-ADMM  & 123.6 & 0.692 &  0.325 & 151.9 &0.406& 0.642 &90.42& 0.441&\underline{0.812}& $-$ & $-$ & $-$ &66.52& 0.353 &\textbf{0.865}\\
& Score-SDE & 76.54 & 0.612 & 0.437 & 60.06 &0.331& 0.678& 109.0& 0.403& 0.109& $-$ & $-$ & $-$ &96.72 &0.563 & 0.617\\
& ADMM-TV & 181.5 & 0.463 & 0.784 & 68.94 &0.322& 0.814&  186.7& 0.507&  0.801& $-$ & $-$ & $-$ &110.6& 0.428 & 0.803\\
& \textbf{CA-DPS} & \textbf{20.14} & \textbf{0.207} &  \textbf{0.881} & \textbf{26.33} &\textbf{0.132}& \underline{0862} & \textbf{32.74} & \textbf{0.238} & \textbf{0.832} & \textbf{27.59} & \textbf{0.217} & \textbf{0.921} & \textbf{28.41} & \textbf{0.196} & \underline{0.855} \\
\midrule
\midrule
\multirow{7}{*}{ImageNet} & DPS & \underline{35.87} &\underline{0.303}& \underline{0.739} & \underline{38.82} &0.262& 0.794 & 62.72 &0.444& 0.706 &56.08 &0.389 &0.634& \underline{50.66} & \underline{0.337} & \underline{0.781}\\ 
& $\Pi$GDM & 41.82 &0.356& 0.705 & 42.26 &0.284& 0.752 & \underline{59.79} &\underline{0.425}& \textbf{0.717} &\underline{54.18} &\underline{0.373} &\textbf{0.675}& 54.26 & 0.352 & 0.765\\
& DDRM & 114.9& 0.665&  0.403& 45.95& \underline{0.245}& \textbf{0.814} &63.02 &0.427&0.705& $-$ & $-$ & $-$ &59.57 &0.339 & \textbf{0.790}\\
& MCG & 39.19& 0.414&  0.546&  39.74 &0.330& 0.633  &95.04& 0.550& 0.441& $-$ & $-$ & $-$ &144.5 & 0.637 &0.227\\ 
& PnP-ADMM& 114.7 &0.677& 0.300&  78.24 &0.367&  0.657 &100.6 &0.519&0.669& $-$ & $-$ & $-$ &97.27 &0.433  & 0.761\\
& Score-SDE&  127.1 &0.659& 0.517& 54.07& 0.354&  0.612&  120.3& 0.667&0.436& $-$ & $-$ & $-$ &170.7 & 0.701 &0.256\\
& ADMM-TV& 189.3& 0.510&  0.676&  87.69 &0.319&  0.785 & 155.7 &0.588& 0.634& $-$ & $-$ & $-$ &130.9 &0.523 & 0.679\\
& \textbf{CA-DPS} & \textbf{32.37} & \textbf{0.214} & \textbf{0.755} & \textbf{33.24} & \textbf{0.247} & \underline{0.807} & \textbf{56.36} & \textbf{0.391} & \underline{0.712} & \textbf{52.06} & \textbf{0.352} & \underline{0.644} & \textbf{47.30} & \textbf{0.316} &  0.777\\
\bottomrule
\end{tabular}}
\caption{\textbf{Quantitative results on the $1k$ validation set of FFHQ $256\times256$ and ImageNet $256\times256$ dataset.} We use \textbf{bold} and \underline{underline} for the best and second best results, respectively.}\label{tab:quant}
\end{table*}

\begin{figure*}[!t]
\centering  \includegraphics[width=0.9\textwidth]{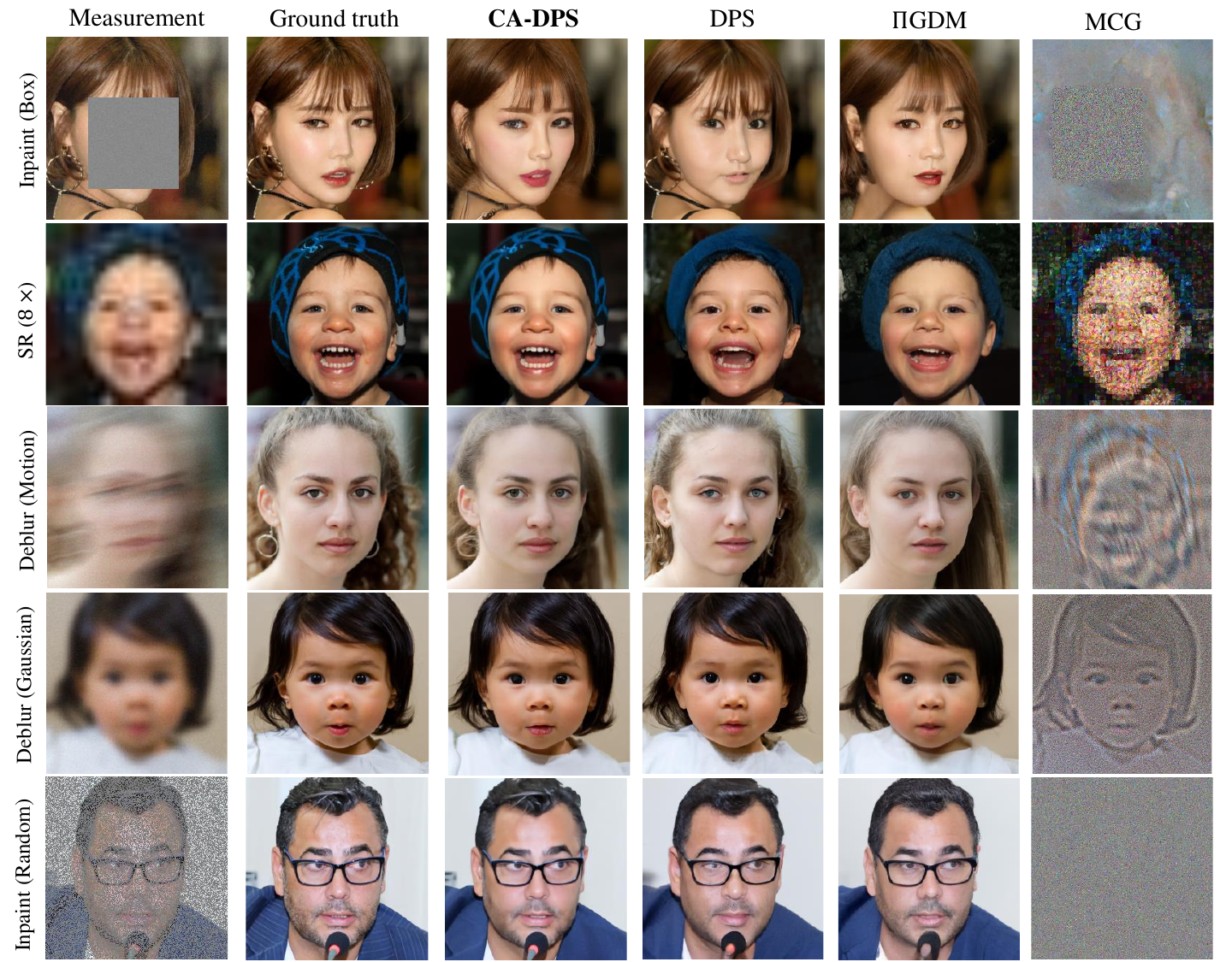}
\vspace{-2mm}
  \caption{Qualitative results on FFHQ dataset.
} \label{fig:fig1}
  \vspace{-2mm}
\end{figure*}

\section{Experiments} \label{sec:exp}
In this section, to demonstrate the superior performance of CA-DPS compared to state-of-the-art alternatives, we evaluate its effectiveness across a range of inverse problems---including inpainting, deblurring, and super-resolution---using two popular datasets. Specifically, we present quantitative and qualitative results in \cref{sec:Quantitative} and \cref{sec:Qualitative}, respectively. To further illustrate that the superiority of CA-DPS stems from its improved approximation of the true posterior, we compare its ability to estimate the true posterior against benchmark methods in \cref{sec:toy}, using a toy dataset with a known posterior.

\subsection{Quantitative Results.}\label{sec:Quantitative}
\noindent $\bullet$ \textbf{Experimental setup.} 
Following \cite{chungdiffusion,dou2024diffusion}, we perform experiments on FFHQ 256$\times$256 \cite{karras2019style} and ImageNet 256$\times$256 datasets \cite{deng2009imagenet}, on 1k validation images each. All images are normalized to the range $[0, 1]$. For a fair comparison, we use the experimental settings in \cite{chungdiffusion} for all the methods. All measurements are corrupted by Gaussian noise with mean zero and $\sigma=0.05$. For the backward process during the inference, we set the number of time steps as $N = 1000$ and use the pre-trained score model from \cite{chungdiffusion} for the FFHQ dataset, and the score model from \cite{dhariwal2021diffusion} for the ImageNet dataset.

The measurement models used are mostly based on \cite{chungdiffusion}: (i) for box-type inpainting, we mask out a \(128 \times 128\) box region, and for random-type inpainting, we mask out 92\% of the total pixels (across all RGB channels); (ii) for super-resolution (SR), we perform bicubic downsampling; (iii) for Gaussian blur, we use a kernel size of \(61 \times 61\) with a standard deviation of 3.0, and for motion blur, we use randomly generated kernels from the code\footnote{\url{https://github.com/LeviBorodenko/motionblur}}, with a size of \(61 \times 61\) and an intensity value of 0.5 (these kernels are then convolved with the ground truth image to produce the measurements).

\noindent $\bullet$ \textbf{Benchmark methods.} We compare the performance of CA-DPS with the following benchmark methods: 
DPS \cite{chungdiffusion}, $\Pi$GDM \citep{song2023pseudoinverse}, denoising diffusion restoration models (DDRM)~\citep{kawar2022denoising}, manifold constrained gradients (MCG)~\citep{chung2022improving}, Plug-and-play alternating direction method of multipliers (PnP-ADMM)~\citep{chan2016plug} , Score-SDE~\citep{song2020score} and total-variation (TV) sparsity regularized optimization method (ADMM-TV). For a fair comparison, we used the same score function for all the different methods that are based on diffusion (i.e. CA-DPS, DPS, DDRM, MCG, score-SDE).

\noindent $\bullet$ \textbf{Evaluation metrics.} To evaluate different methods, we follow \cite{chungdiffusion} to use three metrics: (i) learned perceptual image patch similarity (LPIPS) \cite{zhang2018unreasonable}, (ii) Frechet inception distance (FID) \cite{heusel2017gans}, and (iii) structure similarity index measure (SSIM). These metrics enable a comprehensive assessment of image quality. All our experiments are carried out on a single A100 GPU.

\noindent $\bullet$ \textbf{Experimental results.}
The results for both datasets are listed in \cref{tab:quant}. The results demonstrate that CA-DPS outperforms baselines significantly in almost all the tasks. It is
remarkable that in the challenging inpainting tasks (box and random), CA-DPS achieves the best performance. When assessing performance across the three metrics, CA-DPS emerges as the front-runner in three of them, with superior results compared to the other benchmark methods.

\subsection{Qualitative Results} \label{sec:Qualitative}
In this section, we aim to visualize the reconstructed images from CA-DPS and compare them with those reconstructed from the benchmark methods. To this end, we randomly select five images from the FFHQ test dataset and corrupt them using the measurement methods discussed in \cref{sec:Quantitative}, with the exception that we apply $8 \times$ super-resolution for better visualization. The images are depicted in \cref{fig:fig1}, where each row corresponds to a different measurement method, and each column corresponds to a different benchmark method.

It is observed that the images reconstructed by CA-DPS show greater similarity to the ground truth images compared to those reconstructed by the benchmark methods. Specifically, for the super-resolution task, CA-DPS is the only method that successfully reconstructs the pattern of the hat. Additionally, in the motion-deblurring task, the eyes and other facial textures reconstructed by CA-DPS are much closer to the ground truth image.

\subsection{Toy Dataset} \label{sec:toy}
In this subsection, we aim to illustrate that the superior results obtained by CA-DPS arise from its enhanced approximation of the true posterior. To support this, we shall compare its ability to estimate the true posterior against that of benchmark methods.

To this end, we generate a toy dataset whose distribution $p_0(\boldsymbol{x}_0)$ is a mixture of 25 Gaussian distributions\footnote{We followed \cite{cardoso2023monte,boys2023tweedie} to generate this dataset.}. The means and variances for each mixture component are detailed in the \textit{Supplementary} materials, where we also explain how, for a given set of observations \( \boldsymbol{y} \), measurement matrix \( \bold{A} \), and noise standard deviation \( \sigma \), the target posterior can be computed exactly.

To assess the effectiveness of posterior sampling methods, we generate multiple measurement models $ (\boldsymbol{y}, \bold{A}) \in \mathbb{R}^{m} \times \mathbb{R}^{m \times d} $ for combinations of dimensions and observation noise levels $ (d, m, \sigma) \in \{8, 80, 800\} \times \{1, 2, 4\} \times \{10^{-2}, 10^{-1}, 10^{0}\} $, while each Gaussian mixture component is equally weighted. By choosing different dimension sizes, we aim to understand how posterior sampling methods perform across varying dimensions, while controlling the noise level allows us to evaluate how these methods perform at different signal-to-noise ratios. 

Next, we generate 1000 samples for each of the above scenarios ($3\times 3 \times 3 =27$ scenarios). Then, we use CA-DPS, $\Pi$GDM and DPS to estimate the posterior probability through 1000 denoising steps. Afterward, to evaluate how well each algorithm estimates the posterior distribution compared to the target posterior, we utilize the sliced Wasserstein (SW) distance \cite{kolouri2019generalized}. We calculate the SW distance using $ 10^{4} $ slices for 1000 samples.

\cref{tab:cov} shows the 95\% confidence intervals, derived from 20 randomly selected measurement models ($ \bold{A} $) for each parameter setting ($ d, m, \sigma $). In addition, \cref{fig:fig2} illustrates the first two dimensions of the estimated posterior distributions for the configuration (80, 1) from \cref{tab:cov}, using one randomly generated measurement model ($ \bold{A}, \sigma = 0.1 $). This visualization gives insights into how well the algorithms estimate the posterior distribution, showing that CA-DPS provides a more accurate estimate of the target posterior compared to $\Pi$GDM and DPS, as it captures all modes, whereas $\Pi$GDM and DPS do not.

\begin{table}[!t]
\renewcommand{\baselinestretch}{1.0}
\renewcommand{\arraystretch}{1.0}
\setlength{\tabcolsep}{2.2pt}
\centering
\resizebox{1\columnwidth}{!}{\begin{tabular}{l|l|ccccccccc}
\toprule
& $d$    & 8            & 8            & 8            & 80           & 80           & 80           & 800          & 800          & 800          \\
& $m$    & 1            & 2            & 4            & 1            & 2            & 4            & 1            & 2            & 4            \\ \midrule
\multirow{3}{*}{$\sigma=10^{-2}$} & CA-DPS & \textbf{2.2} & \textbf{1.5} &\textbf{0.5}& \textbf{2.9}& \textbf{1.7} &\textbf{0.4} &\textbf{3.3} &\textbf{2.5} &\textbf{0.3}\\
& DPS      & 4.7          & 1.8          & 0.7          & 5.6          & 3.2          & 1.2          & 5.8          & 3.5          & 1.4          \\
& $\Pi$GDM & 2.6          & 2.1          & 3.8          & 3.2          & 2.8          & 0.6          & 3.5          & 3.1          & 0.4 \\ \midrule
\multirow{3}{*}{$\sigma=10^{-1}$} & CA-DPS & \textbf{1.8}& \textbf{0.9} &\textbf{0.6}& \textbf{2.5}& \textbf{1.7} &\textbf{0.4}& \textbf{2.8}& \textbf{2.3}& \textbf{0.4} \\
&DPS      & 4.7          & 1.5          & 0.8          & 5.1          & 3.1          & 1.0          & 5.7          & 3.1          & 1.3          \\
&$\Pi$GDM & 2.2          & 1.6          & 3.8          & 2.9          & 2.7          & 0.6          & 3.3          & 2.7          & \textbf{0.4}          \\ \midrule
\multirow{3}{*}{$\sigma=10^{0}$} & CA-DPS & \textbf{1.2}& \textbf{1.9}& \textbf{0.9} &1.7 &\textbf{1.2}& \textbf{0.8}& \textbf{1.6} &\textbf{1.5} &0.7  \\
&DPS      & 5.2          & 3.5          & 2.5          & 6.9          & 3.9          & 1.7          & 6.8          & 4.7          & 0.9          \\
&$\Pi$GDM & 1.5          & 2.3          & 1.8          & \textbf{1.6} & 1.4          & 0.9 & 2.0          & 2.0          & \textbf{0.6} \\ 
\bottomrule
\end{tabular}}
\caption{SW distance between the true and estimated posterior on toy dataset.}\label{tab:cov}
\end{table}

\begin{figure}[!t]
\centering  \includegraphics[width=0.8\columnwidth]{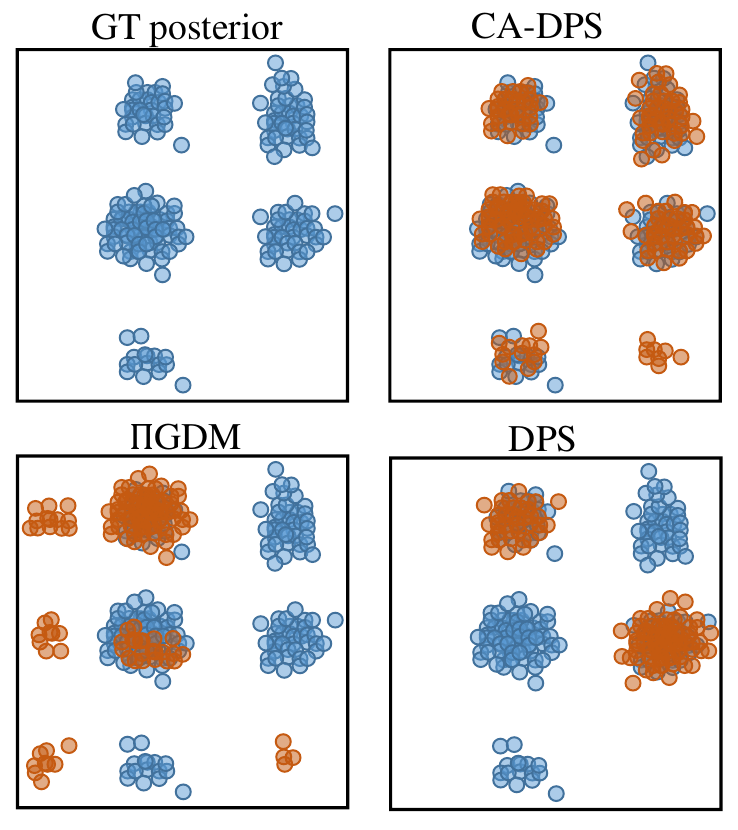}
\vspace{-2mm}
  \caption{Visualizing the first two dimensions of the estimated posterior distributions for the configuration ($d=80$, $m=1$, $\sigma = 10^{-1}$) for a randomly generated $\bold{A}$.
} \label{fig:fig2}
  \vspace{-2mm}
\end{figure}

\section{Conclusion}
In this paper, we proposed CA-DPS, a method designed to enhance the performance of DDPMs in solving inverse problems. To achieve this, we derived a closed-form expression for the covariance of reverse process in DDPMs. We then proposed a method based on finite differences to approximate this covariance, making it easily obtainable from existing pre-trained DDPMs. Utilizing the mean and the approximated covariance of the reverse process, we present a new approximation for the likelihood. Finally, we conducted three sets of experiments to demonstrate the superiority of CA-DPS: (i) quantitative evaluations using various metrics to assess the quality of reconstructed images, (ii) qualitative assessments by visualizing some of the reconstructed images, and (iii) testing the proximity of estimated posterior to the true posterior using a toy dataset with a known posterior.
\bibliography{aaai25}

\begin{thebibliography}{39}
\providecommand{\natexlab}[1]{#1}

\bibitem[{Anderson(1982)}]{anderson1982reverse}
Anderson, B.~D. 1982.
\newblock Reverse-time diffusion equation models.
\newblock \emph{Stochastic Processes and their Applications}, 12(3): 313--326.

\bibitem[{Bora et~al.(2017)Bora, Jalal, Price, and Dimakis}]{bora2017compressed}
Bora, A.; Jalal, A.; Price, E.; and Dimakis, A.~G. 2017.
\newblock Compressed sensing using generative models.
\newblock In \emph{International conference on machine learning}, 537--546. PMLR.

\bibitem[{Boys et~al.(2023)Boys, Girolami, Pidstrigach, Reich, Mosca, and Akyildiz}]{boys2023tweedie}
Boys, B.; Girolami, M.; Pidstrigach, J.; Reich, S.; Mosca, A.; and Akyildiz, O.~D. 2023.
\newblock Tweedie moment projected diffusions for inverse problems.
\newblock \emph{arXiv preprint arXiv:2310.06721}.

\bibitem[{Cardoso et~al.(2023)Cardoso, Idrissi, Corff, and Moulines}]{cardoso2023monte}
Cardoso, G.; Idrissi, Y. J.~E.; Corff, S.~L.; and Moulines, E. 2023.
\newblock Monte Carlo guided diffusion for Bayesian linear inverse problems.
\newblock \emph{arXiv preprint arXiv:2308.07983}.

\bibitem[{Chan, Wang, and Elgendy(2016)}]{chan2016plug}
Chan, S.~H.; Wang, X.; and Elgendy, O.~A. 2016.
\newblock Plug-and-play ADMM for image restoration: Fixed-point convergence and applications.
\newblock \emph{IEEE Transactions on Computational Imaging}, 3(1): 84--98.

\bibitem[{Choi et~al.(2021)Choi, Kim, Jeong, Gwon, and Yoon}]{choi2021ilvr}
Choi, J.; Kim, S.; Jeong, Y.; Gwon, Y.; and Yoon, S. 2021.
\newblock {ILVR}: Conditioning Method for Denoising Diffusion Probabilistic Models.
\newblock In \emph{Proceedings of the IEEE/CVF International Conference on Computer Vision (ICCV)}.

\bibitem[{Chung et~al.(2023)Chung, Kim, Mccann, Klasky, and Ye}]{chungdiffusion}
Chung, H.; Kim, J.; Mccann, M.~T.; Klasky, M.~L.; and Ye, J.~C. 2023.
\newblock Diffusion Posterior Sampling for General Noisy Inverse Problems.
\newblock In \emph{The Eleventh International Conference on Learning Representations}.

\bibitem[{Chung et~al.(2022)Chung, Sim, Ryu, and Ye}]{chung2022improving}
Chung, H.; Sim, B.; Ryu, D.; and Ye, J.~C. 2022.
\newblock Improving Diffusion Models for Inverse Problems using Manifold Constraints.
\newblock \emph{arXiv preprint arXiv:2206.00941}.

\bibitem[{Chung, Sim, and Ye(2022)}]{chung2022come}
Chung, H.; Sim, B.; and Ye, J.~C. 2022.
\newblock {Come-Closer-Diffuse-Faster: Accelerating Conditional Diffusion Models for Inverse Problems through Stochastic Contraction}.
\newblock In \emph{Proceedings of the IEEE/CVF Conference on Computer Vision and Pattern Recognition}.

\bibitem[{Deng et~al.(2009)Deng, Dong, Socher, Li, Li, and Fei-Fei}]{deng2009imagenet}
Deng, J.; Dong, W.; Socher, R.; Li, L.-J.; Li, K.; and Fei-Fei, L. 2009.
\newblock Imagenet: A large-scale hierarchical image database.
\newblock In \emph{2009 IEEE conference on computer vision and pattern recognition}, 248--255. Ieee.

\bibitem[{Dhariwal and Nichol(2021)}]{dhariwal2021diffusion}
Dhariwal, P.; and Nichol, A.~Q. 2021.
\newblock Diffusion Models Beat {GAN}s on Image Synthesis.
\newblock In Beygelzimer, A.; Dauphin, Y.; Liang, P.; and Vaughan, J.~W., eds., \emph{Advances in Neural Information Processing Systems}.

\bibitem[{Dou and Song(2024)}]{dou2024diffusion}
Dou, Z.; and Song, Y. 2024.
\newblock Diffusion Posterior Sampling for Linear Inverse Problem Solving: A Filtering Perspective.
\newblock In \emph{The Twelfth International Conference on Learning Representations}.

\bibitem[{Efron(2011)}]{efron2011tweedie}
Efron, B. 2011.
\newblock Tweedie’s formula and selection bias.
\newblock \emph{Journal of the American Statistical Association}, 106(496): 1602--1614.

\bibitem[{Feng and Bouman(2023)}]{feng2023efficient}
Feng, B.~T.; and Bouman, K.~L. 2023.
\newblock Efficient Bayesian Computational Imaging with a Surrogate Score-Based Prior.
\newblock \emph{arXiv preprint arXiv:2309.01949}.

\bibitem[{Heusel et~al.(2017)Heusel, Ramsauer, Unterthiner, Nessler, and Hochreiter}]{heusel2017gans}
Heusel, M.; Ramsauer, H.; Unterthiner, T.; Nessler, B.; and Hochreiter, S. 2017.
\newblock Gans trained by a two time-scale update rule converge to a local nash equilibrium.
\newblock \emph{Advances in neural information processing systems}, 30.

\bibitem[{Ho, Jain, and Abbeel(2020)}]{ho2020denoising}
Ho, J.; Jain, A.; and Abbeel, P. 2020.
\newblock Denoising Diffusion Probabilistic Models.
\newblock In \emph{Advances in Neural Information Processing Systems}, volume~33, 6840--6851.

\bibitem[{Ho et~al.(2022)Ho, Salimans, Gritsenko, Chan, Norouzi, and Fleet}]{ho2022video}
Ho, J.; Salimans, T.; Gritsenko, A.; Chan, W.; Norouzi, M.; and Fleet, D.~J. 2022.
\newblock Video Diffusion Models.
\newblock \emph{arXiv preprint arXiv:2204.03458}.

\bibitem[{Jalal et~al.(2021{\natexlab{a}})Jalal, Arvinte, Daras, Price, Dimakis, and Tamir}]{jalal2021robust}
Jalal, A.; Arvinte, M.; Daras, G.; Price, E.; Dimakis, A.~G.; and Tamir, J. 2021{\natexlab{a}}.
\newblock Robust compressed sensing mri with deep generative priors.
\newblock \emph{Advances in Neural Information Processing Systems}, 34: 14938--14954.

\bibitem[{Jalal et~al.(2021{\natexlab{b}})Jalal, Karmalkar, Hoffmann, Dimakis, and Price}]{jalal2021fairness}
Jalal, A.; Karmalkar, S.; Hoffmann, J.; Dimakis, A.; and Price, E. 2021{\natexlab{b}}.
\newblock Fairness for image generation with uncertain sensitive attributes.
\newblock In \emph{International Conference on Machine Learning}, 4721--4732. PMLR.

\bibitem[{Kadkhodaie and Simoncelli(2021)}]{kadkhodaie2021stochastic}
Kadkhodaie, Z.; and Simoncelli, E. 2021.
\newblock Stochastic solutions for linear inverse problems using the prior implicit in a denoiser.
\newblock \emph{Advances in Neural Information Processing Systems}, 34: 13242--13254.

\bibitem[{Karras, Laine, and Aila(2019)}]{karras2019style}
Karras, T.; Laine, S.; and Aila, T. 2019.
\newblock A style-based generator architecture for generative adversarial networks.
\newblock In \emph{Proceedings of the IEEE/CVF conference on computer vision and pattern recognition}, 4401--4410.

\bibitem[{Kawar et~al.(2022)Kawar, Elad, Ermon, and Song}]{kawar2022denoising}
Kawar, B.; Elad, M.; Ermon, S.; and Song, J. 2022.
\newblock Denoising diffusion restoration models.
\newblock \emph{Advances in Neural Information Processing Systems}, 35: 23593--23606.

\bibitem[{Kawar, Vaksman, and Elad(2021)}]{kawar2021snips}
Kawar, B.; Vaksman, G.; and Elad, M. 2021.
\newblock SNIPS: Solving noisy inverse problems stochastically.
\newblock \emph{Advances in Neural Information Processing Systems}, 34: 21757--21769.

\bibitem[{Kolouri et~al.(2019)Kolouri, Nadjahi, Simsekli, Badeau, and Rohde}]{kolouri2019generalized}
Kolouri, S.; Nadjahi, K.; Simsekli, U.; Badeau, R.; and Rohde, G. 2019.
\newblock Generalized sliced wasserstein distances.
\newblock \emph{Advances in neural information processing systems}, 32.

\bibitem[{Kong et~al.()Kong, Ping, Huang, Zhao, and Catanzaro}]{kongdiffwave}
Kong, Z.; Ping, W.; Huang, J.; Zhao, K.; and Catanzaro, B. ????
\newblock DiffWave: A Versatile Diffusion Model for Audio Synthesis.
\newblock In \emph{International Conference on Learning Representations}.

\bibitem[{Lu et~al.(2022)Lu, Zheng, Bao, Chen, Li, and Zhu}]{lu2022maximum}
Lu, C.; Zheng, K.; Bao, F.; Chen, J.; Li, C.; and Zhu, J. 2022.
\newblock Maximum likelihood training for score-based diffusion odes by high order denoising score matching.
\newblock In \emph{International Conference on Machine Learning}, 14429--14460. PMLR.

\bibitem[{Mardani et~al.(2023)Mardani, Song, Kautz, and Vahdat}]{mardani2023variational}
Mardani, M.; Song, J.; Kautz, J.; and Vahdat, A. 2023.
\newblock A Variational Perspective on Solving Inverse Problems with Diffusion Models.
\newblock \emph{arXiv preprint arXiv:2305.04391}.

\bibitem[{Meng et~al.(2021)Meng, Song, Li, and Ermon}]{meng2021estimating}
Meng, C.; Song, Y.; Li, W.; and Ermon, S. 2021.
\newblock Estimating High Order Gradients of the Data Distribution by Denoising.
\newblock In Beygelzimer, A.; Dauphin, Y.; Liang, P.; and Vaughan, J.~W., eds., \emph{Advances in Neural Information Processing Systems}.

\bibitem[{Nichol et~al.(2022)Nichol, Dhariwal, Ramesh, Shyam, Mishkin, Mcgrew, Sutskever, and Chen}]{nichol2022glide}
Nichol, A.~Q.; Dhariwal, P.; Ramesh, A.; Shyam, P.; Mishkin, P.; Mcgrew, B.; Sutskever, I.; and Chen, M. 2022.
\newblock GLIDE: Towards Photorealistic Image Generation and Editing with Text-Guided Diffusion Models.
\newblock In \emph{International Conference on Machine Learning}, 16784--16804. PMLR.

\bibitem[{Peng et~al.(2024)Peng, Zheng, Dai, Xiao, Li, Zou, and Xiong}]{peng2024improving}
Peng, X.; Zheng, Z.; Dai, W.; Xiao, N.; Li, C.; Zou, J.; and Xiong, H. 2024.
\newblock Improving Diffusion Models for Inverse Problems Using Optimal Posterior Covariance.
\newblock In \emph{Forty-first International Conference on Machine Learning}.

\bibitem[{Saharia et~al.(2022)Saharia, Chan, Saxena, Li, Whang, Denton, Ghasemipour, Gontijo~Lopes, Karagol~Ayan, Salimans et~al.}]{saharia2022photorealistic}
Saharia, C.; Chan, W.; Saxena, S.; Li, L.; Whang, J.; Denton, E.~L.; Ghasemipour, K.; Gontijo~Lopes, R.; Karagol~Ayan, B.; Salimans, T.; et~al. 2022.
\newblock Photorealistic text-to-image diffusion models with deep language understanding.
\newblock \emph{Advances in neural information processing systems}, 35: 36479--36494.

\bibitem[{Song, Meng, and Ermon(2021)}]{song2020denoising}
Song, J.; Meng, C.; and Ermon, S. 2021.
\newblock Denoising Diffusion Implicit Models.
\newblock In \emph{9th International Conference on Learning Representations, {ICLR}}.

\bibitem[{Song et~al.(2023)Song, Vahdat, Mardani, and Kautz}]{song2023pseudoinverse}
Song, J.; Vahdat, A.; Mardani, M.; and Kautz, J. 2023.
\newblock Pseudoinverse-guided diffusion models for inverse problems.
\newblock In \emph{International Conference on Learning Representations}.

\bibitem[{Song et~al.(2022)Song, Shen, Xing, and Ermon}]{song2022solving}
Song, Y.; Shen, L.; Xing, L.; and Ermon, S. 2022.
\newblock Solving Inverse Problems in Medical Imaging with Score-Based Generative Models.
\newblock In \emph{International Conference on Learning Representations}.

\bibitem[{Song et~al.(2021)Song, Sohl{-}Dickstein, Kingma, Kumar, Ermon, and Poole}]{song2020score}
Song, Y.; Sohl{-}Dickstein, J.; Kingma, D.~P.; Kumar, A.; Ermon, S.; and Poole, B. 2021.
\newblock Score-Based Generative Modeling through Stochastic Differential Equations.
\newblock In \emph{9th International Conference on Learning Representations, {ICLR}}.

\bibitem[{Stevens et~al.(2023)Stevens, van Gorp, Meral, Shin, Yu, Robert, and van Sloun}]{stevens2023removing}
Stevens, T.~S.; van Gorp, H.; Meral, F.~C.; Shin, J.; Yu, J.; Robert, J.-L.; and van Sloun, R.~J. 2023.
\newblock Removing structured noise with diffusion models.
\newblock \emph{arXiv preprint arXiv:2302.05290}.

\bibitem[{Vincent(2011)}]{vincent2011connection}
Vincent, P. 2011.
\newblock A connection between score matching and denoising autoencoders.
\newblock \emph{Neural computation}, 23(7): 1661--1674.

\bibitem[{Zhang et~al.(2023)Zhang, Zhang, Zhang, and Kweon}]{zhang2023text}
Zhang, C.; Zhang, C.; Zhang, M.; and Kweon, I.~S. 2023.
\newblock Text-to-image diffusion models in generative ai: A survey.
\newblock \emph{arXiv preprint arXiv:2303.07909}.

\bibitem[{Zhang et~al.(2018)Zhang, Isola, Efros, Shechtman, and Wang}]{zhang2018unreasonable}
Zhang, R.; Isola, P.; Efros, A.~A.; Shechtman, E.; and Wang, O. 2018.
\newblock The unreasonable effectiveness of deep features as a perceptual metric.
\newblock In \emph{Proceedings of the IEEE conference on computer vision and pattern recognition}, 586--595.

\end{thebibliography}

\onecolumn
\section{Anonymous Repo of the Project}
Please use the following website for the code and the repo of the paper:

\url{https://anonymous.4open.science/r/AAAI-2025-Covariance-Aware-Diffusion-Models-80E8/README.md}

\section{Proof of Theorem 1}
\begin{proof}
The marginal distribution $p({\boldsymbol{y}})$ could be expressed as
\begin{align}
    p({\boldsymbol{y}}) & = \int p({\boldsymbol{y}}|{\boldsymbol{\eta}}) p({\boldsymbol{\eta}}) d{\boldsymbol{\eta}}\\
    & = \int p_0({\boldsymbol{y}}) \exp{\left({\boldsymbol{\eta}}^\top T({\boldsymbol{y}})-\varphi({\boldsymbol{\eta}})\right)}p({\boldsymbol{\eta}}) d{\boldsymbol{\eta}}.
\end{align}
Then, the derivative of the marginal distribution $p({\boldsymbol{y}})$ with respect to $y$ becomes
\begin{align}
    \nabla_{y_i} p({\boldsymbol{y}}) & = {\nabla_{y_i} p_0({\boldsymbol{y}})}\int \exp{\left({\boldsymbol{\eta}}^\top T({\boldsymbol{y}})-\varphi({\boldsymbol{\eta}})\right)}p({\boldsymbol{\eta}})d{\boldsymbol{\eta}} +
                \int (\nabla_{y_i} T({\boldsymbol{y}}))^\top {\boldsymbol{\eta}} p_0({\boldsymbol{y}})\exp{\left({\boldsymbol{\eta}}^\top T({\boldsymbol{y}})-\varphi({\boldsymbol{\eta}})\right)}p({\boldsymbol{\eta}})d{\boldsymbol{\eta}} \label{eq:firstder}\\
    & = \frac{\nabla_{y_i} p_0({\boldsymbol{y}})}{p_0({\boldsymbol{y}})}\int p({\boldsymbol{y}}|{\boldsymbol{\eta}}) p({\boldsymbol{\eta}}) d{\boldsymbol{\eta}}
     +  (\nabla_{y_i} T({\boldsymbol{y}}))^\top \int {\boldsymbol{\eta}} p({\boldsymbol{y}}|{\boldsymbol{\eta}}) p({\boldsymbol{\eta}}) d{\boldsymbol{\eta}}\\
       & = \frac{\nabla_{y_i} p_0({\boldsymbol{y}})}{p_0({\boldsymbol{y}})} p({\boldsymbol{y}})
     +  (\nabla_{y_i} T({\boldsymbol{y}}))^\top  \int {\boldsymbol{\eta}} p({\boldsymbol{y}},{\boldsymbol{\eta}}) d{\boldsymbol{\eta}}\\  
\end{align}
Therefore,
\begin{align}
    \frac{\nabla_{\boldsymbol{y}} p({\boldsymbol{y}})}{p({\boldsymbol{y}})} = \frac{\nabla_{\boldsymbol{y}} p_0({\boldsymbol{y}})}{p_0({\boldsymbol{y}})} + (\nabla_{\boldsymbol{y}} T({\boldsymbol{y}}))^\top  \int {\boldsymbol{\eta}} p({\boldsymbol{\eta}}|\boldsymbol{y})  d{\boldsymbol{\eta}}
\end{align}
which is equivalent to 
\begin{align}
(\nabla_{\boldsymbol{y}} T({\boldsymbol{y}}))^\top \mathbb{E}[{\boldsymbol{\eta}}|\boldsymbol{y}] = \nabla_{\boldsymbol{y}} \log p({\boldsymbol{y}})-\nabla_{\boldsymbol{y}}\log p_0({\boldsymbol{y}})
\end{align}

Now, we take another derivative w.r.t. $\boldsymbol{y}[j]$ from both sides of \cref{eq:firstder}: 
\begin{align}
\nabla_{y_j} \nabla_{y_i} p({\boldsymbol{y}}) & = \nabla_{y_j} \nabla_{y_i} p_0({\boldsymbol{y}}) \int \exp{\left({\boldsymbol{\eta}}^\top T({\boldsymbol{y}})-\varphi({\boldsymbol{\eta}})\right)}p({\boldsymbol{\eta}})d{\boldsymbol{\eta}} \nonumber \\
& \qquad+ \nabla_{y_i} p_0({\boldsymbol{y}}) (\nabla_{y_j} T({\boldsymbol{y}}))^\top
            \int  {\boldsymbol{\eta}} \exp{\left({\boldsymbol{\eta}}^\top T({\boldsymbol{y}})-\varphi({\boldsymbol{\eta}})\right)}p({\boldsymbol{\eta}})d{\boldsymbol{\eta}} \nonumber\\
& \qquad + \Big(\nabla_{y_j} \nabla_{y_i} T({\boldsymbol{y}}) \Big)^\top \odot p_0({\boldsymbol{y}}) \int  {\boldsymbol{\eta}}\exp{\left({\boldsymbol{\eta}}^\top T({\boldsymbol{y}})-\varphi({\boldsymbol{\eta}})\right)}p({\boldsymbol{\eta}})d{\boldsymbol{\eta}} \nonumber\\
& \qquad + \nabla_{y_j} p_0({\boldsymbol{y}}) (\nabla_{y_i} T({\boldsymbol{y}}))^\top \int  {\boldsymbol{\eta}}\exp{\left({\boldsymbol{\eta}}^\top T({\boldsymbol{y}})-\varphi({\boldsymbol{\eta}})\right)}p({\boldsymbol{\eta}})d{\boldsymbol{\eta}} \nonumber \\
& \qquad + (\nabla_{y_i} T({\boldsymbol{y}}))^\top p_0({\boldsymbol{y}})  \int  {\boldsymbol{\eta}} {\boldsymbol{\eta}}^\top \nabla_{y_j} T({\boldsymbol{y}}) \exp{\left({\boldsymbol{\eta}}^\top T({\boldsymbol{y}})-\varphi({\boldsymbol{\eta}})\right)}p({\boldsymbol{\eta}})d{\boldsymbol{\eta}} \\
& = \frac{\nabla_{y_j} \nabla_{y_i} p_0({\boldsymbol{y}}) }{p_0({\boldsymbol{y}})} p({\boldsymbol{y}}) + \frac{\nabla_{y_i} p_0({\boldsymbol{y}}) }{p_0({\boldsymbol{y}})} (\nabla_{y_j} T({\boldsymbol{y}}))^\top \int {\boldsymbol{\eta}} p({\boldsymbol{y}},{\boldsymbol{\eta}}) d{\boldsymbol{\eta}} \nonumber\\
& \qquad + \Big(\big(\nabla_{y_j} \nabla_{y_i} T({\boldsymbol{y}})\big)^\top \odot p_0({\boldsymbol{y}}) + \nabla_{y_j} p_0({\boldsymbol{y}}) (\nabla_{y_i} T({\boldsymbol{y}}))^\top \Big) \frac{\int {\boldsymbol{\eta}} p({\boldsymbol{y}},{\boldsymbol{\eta}}) d{\boldsymbol{\eta}}}{p_0({\boldsymbol{y}})} \nonumber \\
& \qquad + (\nabla_{y_i} T({\boldsymbol{y}}))^\top  p_0({\boldsymbol{y}})  \int {\boldsymbol{\eta}} {\boldsymbol{\eta}}^\top \nabla_{y_j} T({\boldsymbol{y}})  p({\boldsymbol{y}},{\boldsymbol{\eta}}) d{\boldsymbol{\eta}} \label{eq:seclast}
\end{align}
Now, we divide both sides of \cref{eq:seclast} by  $p({\boldsymbol{y}})$ to get
\begin{align}
\frac{\nabla_{y_j} \nabla_{y_i} p({\boldsymbol{y}})}{p({\boldsymbol{y}})} & =    \frac{\nabla_{y_j} \nabla_{y_i} p_0({\boldsymbol{y}}) }{p_0({\boldsymbol{y}})} + \frac{\nabla_{y_i} p_0({\boldsymbol{y}}) }{p_0({\boldsymbol{y}})} (\nabla_{y_j} T({\boldsymbol{y}}))^\top \int {\boldsymbol{\eta}} p({\boldsymbol{\eta}} | \boldsymbol{y}) d{\boldsymbol{\eta}} \nonumber \\
& \qquad + \big(\nabla_{y_j} \nabla_{y_i} T({\boldsymbol{y}})\big)^\top \odot \int {\boldsymbol{\eta}} p({\boldsymbol{\eta}} | \boldsymbol{y}) d{\boldsymbol{\eta}} + \frac{ \nabla_{y_j} p_0({\boldsymbol{y}}) (\nabla_{y_i} T({\boldsymbol{y}}))^\top}{p_0({\boldsymbol{y}})} \int {\boldsymbol{\eta}} p({\boldsymbol{\eta}} | \boldsymbol{y}) d{\boldsymbol{\eta}} \nonumber \\
& \qquad + (\nabla_{y_i} T({\boldsymbol{y}}))^\top  p_0({\boldsymbol{y}})  \int {\boldsymbol{\eta}} {\boldsymbol{\eta}}^\top \nabla_{y_j} T({\boldsymbol{y}}) p({\boldsymbol{\eta}} | \boldsymbol{y}) d{\boldsymbol{\eta}} \\
& =    \frac{\nabla_{y_j} \nabla_{y_i} p_0({\boldsymbol{y}}) }{p_0({\boldsymbol{y}})} + \frac{\nabla_{y_i} p_0({\boldsymbol{y}}) (\nabla_{y_j} T({\boldsymbol{y}}))^\top}{p_0({\boldsymbol{y}})} \mathbb{E}({\boldsymbol{\eta}}|\boldsymbol{y})  \nonumber \\
& \qquad + \big(\nabla_{y_j} \nabla_{y_i} T({\boldsymbol{y}})\big)^\top \odot \mathbb{E}({\boldsymbol{\eta}}|\boldsymbol{y}) + \frac{ \nabla_{y_j} p_0({\boldsymbol{y}}) (\nabla_{y_i} T({\boldsymbol{y}}))^\top}{p_0({\boldsymbol{y}})} \mathbb{E}({\boldsymbol{\eta}}|\boldsymbol{y}) \nonumber \\
& \qquad +(\nabla_{y_i} T({\boldsymbol{y}}))^\top   \mathbb{E}({\boldsymbol{\eta}} {\boldsymbol{\eta}}^\top |\boldsymbol{y}) \nabla_{y_j} T({\boldsymbol{y}})  \label{eq:last}
\end{align}
By isolating the term $(\nabla_{y_i} T({\boldsymbol{y}}))^\top   \mathbb{E}({\boldsymbol{\eta}} {\boldsymbol{\eta}}^\top |\boldsymbol{y}) \nabla_{y_j} T({\boldsymbol{y}})$ in \cref{eq:last}, and subtracting $ (\nabla_{y_i} T({\boldsymbol{y}}))^\top   \mathbb{E}({\boldsymbol{\eta}} |\boldsymbol{y}) \mathbb{E}({\boldsymbol{\eta}}^\top |\boldsymbol{y}) \nabla_{y_j} T({\boldsymbol{y}})$ from it, we obtain 
\begin{align}
(\nabla_{y_i} T({\boldsymbol{y}}))^\top    & \text{Cov}({\boldsymbol{\eta}}|\boldsymbol{y}) \nabla_{y_j} T({\boldsymbol{y}})=  (\nabla_{y_i} T({\boldsymbol{y}}))^\top   \mathbb{E}({\boldsymbol{\eta}} {\boldsymbol{\eta}}^\top |\boldsymbol{y}) \nabla_{y_j} T({\boldsymbol{y}}) -   (\nabla_{y_i} T({\boldsymbol{y}}))^\top   \mathbb{E}({\boldsymbol{\eta}} |\boldsymbol{y}) \mathbb{E}({\boldsymbol{\eta}}^\top |\boldsymbol{y}) \nabla_{y_j} T({\boldsymbol{y}}) \\
& = \frac{\nabla_{y_j} \nabla_{y_i} p({\boldsymbol{y}})}{p({\boldsymbol{y}})} - \frac{\nabla_{y_j} \nabla_{y_i} p_0({\boldsymbol{y}}) }{p_0({\boldsymbol{y}})} - \frac{\nabla_{y_i} p_0({\boldsymbol{y}}) (\nabla_{y_j} T({\boldsymbol{y}}))^\top}{p_0({\boldsymbol{y}})} \mathbb{E}({\boldsymbol{\eta}}|\boldsymbol{y}) - \big(\nabla_{y_j} \nabla_{y_i} T({\boldsymbol{y}})\big)^\top \odot \mathbb{E}({\boldsymbol{\eta}}|\boldsymbol{y}) \\
& \qquad - \frac{ \nabla_{y_j} p_0({\boldsymbol{y}}) (\nabla_{y_i} T({\boldsymbol{y}}))^\top}{p_0({\boldsymbol{y}})} \mathbb{E}({\boldsymbol{\eta}}|\boldsymbol{y}) - (\nabla_{y_i} T({\boldsymbol{y}}))^\top   \mathbb{E}({\boldsymbol{\eta}} |\boldsymbol{y}) \mathbb{E}({\boldsymbol{\eta}}^\top |\boldsymbol{y}) \nabla_{y_j} T({\boldsymbol{y}}) \\
& = \frac{\nabla_{y_j} \nabla_{y_i} p({\boldsymbol{y}})}{p({\boldsymbol{y}})} - \frac{\nabla_{y_j} \nabla_{y_i} p_0({\boldsymbol{y}}) }{p_0({\boldsymbol{y}})} - \nabla_{y_i} \log p_0({\boldsymbol{y}}) \Big( \nabla_{y_j} \log p({\boldsymbol{y}}) - \nabla_{y_j} \log  p_0({\boldsymbol{y}})\Big) \\
& \qquad - \nabla_{y_j} \log p_0({\boldsymbol{y}}) \Big( \nabla_{y_i} \log p({\boldsymbol{y}}) - \nabla_{y_i} \log  p_0({\boldsymbol{y}})\Big) - \big(\nabla_{y_j} \nabla_{y_i} T({\boldsymbol{y}})\big)^\top \odot \mathbb{E}({\boldsymbol{\eta}}|\boldsymbol{y}) \\
& \qquad - \Big( \nabla_{y_i} \log p({\boldsymbol{y}}) - \nabla_{y_i} \log  p_0({\boldsymbol{y}}) \Big) \Big( \nabla_{y_j} \log p({\boldsymbol{y}}) - \nabla_{y_j} \log  p_0({\boldsymbol{y}}) \Big) \\
& = \frac{\nabla_{y_j} \nabla_{y_i} p({\boldsymbol{y}})}{p({\boldsymbol{y}})} - \frac{\nabla_{y_j} \nabla_{y_i} p_0({\boldsymbol{y}}) }{p_0({\boldsymbol{y}})} + \nabla_{y_i} \log p_0({\boldsymbol{y}}) \nabla_{y_j} \log  p_0({\boldsymbol{y}}) \\
& \qquad - \nabla_{y_i} \log p({\boldsymbol{y}}) \nabla_{y_j} \log  p({\boldsymbol{y}}) - \big(\nabla_{y_j} \nabla_{y_i} T({\boldsymbol{y}})\big)^\top \odot \mathbb{E}({\boldsymbol{\eta}}|\boldsymbol{y})  \\
& =  \nabla_{y_j} \nabla_{y_i} \log p({\boldsymbol{y}})  - \nabla_{y_j} \nabla_{y_i} \log p_0({\boldsymbol{y}}) - \big(\nabla_{y_j} \nabla_{y_i} T({\boldsymbol{y}})\big)^\top \odot \mathbb{E}({\boldsymbol{\eta}}|\boldsymbol{y}).
\end{align}
Thus
\begin{align}
(\nabla_{\boldsymbol{y}} T({\boldsymbol{y}}))^\top    & \text{Cov}({\boldsymbol{\eta}}|\boldsymbol{y}) \nabla_{\boldsymbol{y}} T({\boldsymbol{y}})=  \nabla^2_{\boldsymbol{y}} \log p({\boldsymbol{y}}) - \nabla^2_{\boldsymbol{y}} \log p_0({\boldsymbol{y}}) - \nabla^2_{\boldsymbol{y}} T({\boldsymbol{y}}) \odot \mathbb{E}({\boldsymbol{\eta}}|\boldsymbol{y}),
\end{align}
which concludes the proof.
\end{proof}    

\section{Toy dataset}\label{appendix:toy}
The generation of this dataset is inspired from \citet{boys2023tweedie}.

As explained earlier in the paper, we model \( p_{0}(\boldsymbol{x}_0) \) as a mixture of 25 Gaussian distributions. Each of these Gaussian components has a mean vector \( \bold{U}_{i,j} \) in \( \mathbb{R}^{d} \), defined as \( \bold{U}_{i,j} = (8i, 8j, \dots, 8i, 8j) \) for each pair \( (i, j) \) where \( i \) and \( j \) take values from the set \(\{-2, -1, 0, 1, 2\} \). All components have the same variance of 1. The unnormalized weight associated with each component is \( \omega_{i,j} = 1.0 \). Additionally, we have set the variance of the noise, \( \sigma_{\delta}^{2} \), to \( 10^{-4} \).

Recall that the distribution \( p_{t}(\boldsymbol{x}_{t}) \) can be expressed as an integral: \( p_{t}(\boldsymbol{x}_{t}) = \int p_{t|0}(\boldsymbol{x}_{t}|\boldsymbol{x}_{0}) p_{0}(\boldsymbol{x}_{0}) d\boldsymbol{x}_{0} \). Since \( p_{0}(\boldsymbol{x}_{0}) \) is a mixture of Gaussian distributions, \( p_{t}(\boldsymbol{x}_{t}) \) is also a mixture of Gaussians. The means of these Gaussians are given by \( \sqrt{\alpha_{t}} \bold{U}_{i, j} \), and each Gaussian has unit variance. By using automatic differentiation libraries, we can efficiently compute the gradient \( \nabla_{\boldsymbol{x}_{t}} \log p_{t}(\boldsymbol{x}_{t}) \).

We have set the parameters \( \beta_{\text{max}} = 500.0 \) and \( \beta_{\text{min}} = 0.1 \), and we use 1000 timesteps to discretize the time domain. For a given pair of dimensions and a chosen observation noise standard deviation \( (d, m, \sigma) \), the measurement model \( (\boldsymbol{y}, \bold{A}) \) is generated as follows:

\noindent $\bullet$ Matrix \( \bold{A} \): First, we sample a random matrix \( \tilde{\bold{A}} \) from a Gaussian distribution \( \mathcal{N}(\mathbf{0}_{m \times d}, \bold{I}_{m \times d}) \). We then compute its singular value decomposition (SVD), \( \tilde{\bold{A}} = \bold{U} \bold{S} \bold{V}^{\top} \). For each pair \( (i, j) \) in \(\{-2, -1, 0, 1, 2\}^{2} \), we draw a singular value \( s_{i, j} \) from a uniform distribution on the interval \([0, 1]\). Finally, we construct the matrix \( \bold{A} = \bold{U} \text{diag}(\{s_{i, j}\}_{(i, j) \in \{-2, -1, 0, 1, 2\}^{2}})\bold{V}^{\top} \).

\noindent $\bullet$ Observation vector \( \boldsymbol{y} \): Next, we sample a vector \( \boldsymbol{x}_{*} \) from the distribution \( p_{0} \). The observation vector \( \boldsymbol{y} \) is then obtained by applying the matrix \( \bold{A} \) to \( \boldsymbol{x}_{*} \) and adding Gaussian noise \( \boldsymbol{z} \), where \( \boldsymbol{z} \) is sampled from \( \mathcal{N}(\mathbf{0}, \sigma^{2}\bold{I}_{m}) \).

Once we have drawn both $\boldsymbol{x}_{*} \sim p_{0}$ and $(\boldsymbol{y}, \bold{A}, \sigma)$, the posterior can be exactly calculated using Bayes formula and gives a mixture of Gaussians with mixture components $c_{i, j}$ and associated weights $\tilde{\omega}_{i, j}$,
\begin{align}
c_{i, j} &:= \mathcal{N}(\bold{\Sigma}(\bold{A}^{\top} \boldsymbol{y} / \sigma^{2} + \bold{U}_{i, j}), \bold{\Sigma}),\\
\tilde{\omega}_{i} &:= \omega_{i} \mathcal{N}(\boldsymbol{y}; \bold{A} \bold{U}_{i,j}, \sigma_{\delta}^{2}\bold{I}_{d} + \bold{A} \bold{A}^{\top}),
\end{align}
where $\bold{\Sigma} = (\bold{I}_{d} + \frac{1}{\sigma_{\delta}^{2}} \bold{A}^{\top} \bold{A})^{-1}$.

\subsection{SW Distance Calculation}
To compare the posterior distribution estimated by each algorithm with the target posterior distribution, we use $10^{4}$ slices for the SW distance and compare 1000 samples of the true posterior distribution.

\cref{table:a1} and \cref{table:a2} indicate the 95\% confidence intervals obtained by considering 20 randomly selected measurement models ($\bold{A}$) for each setting ($d, m, \sigma$). 

\begin{table}[!t] \caption{Sliced Wasserstein for VE-DDPM.}
\label{table:a1}
\renewcommand{\baselinestretch}{1.0}
\renewcommand{\arraystretch}{1.0}
\setlength{\tabcolsep}{2.2pt}
\centering
\resizebox{0.6\columnwidth}{!}{\begin{tabular}{|ll|ccc|ccc|ccc|}
\cline{1-11}
 & & \multicolumn{3}{c}{$\sigma = 0.01$}  & \multicolumn{3}{c}{$\sigma = 0.1$}  & \multicolumn{3}{c}{$\sigma = 1.0$}  \\ \hline
$d$ & $m$ & CA-DPS                     &  $\Pi$GDM                 & DPS             & CA-DPS   & $\Pi$GDM                 & DPS             & CA-DPS                     & $\Pi$GDM                 & DPS             \\ \hline
$8$     & $1$     & \textbf{1.9 \tiny $\pm$ 0.5}                    & 2.6 \tiny $\pm$ 0.9          & 4.7 \tiny $\pm$ 1.5 & \textbf{1.4 \tiny $\pm$ 0.6}  & 2.2 \tiny $\pm$ 0.9          & 4.7 \tiny $\pm$ 1.6 & \textbf{1.2 \tiny $\pm$ 0.6} &  1.5 \tiny $\pm$ 0.4          & 5.2 \tiny $\pm$ 1.3 \\
$8$     & $2$     & \textbf{0.8 \tiny $\pm$ 0.4}           & 2.1 \tiny $\pm$ 1.0          & 1.8 \tiny $\pm$ 1.5 & \textbf{1.0 \tiny $\pm$ 0.4} & 1.6 \tiny $\pm$ 0.6          & 1.5 \tiny $\pm$ 0.9 & \textbf{1.0 \tiny $\pm$ 0.3}           & 2.3 \tiny $\pm$ 0.4          & 3.5 \tiny $\pm$ 1.2 \\
$8$     & $4$     & \textbf{0.4 \tiny $\pm$ 0.2}  & 3.8 \tiny $\pm$ 2.3          & 0.7 \tiny $\pm$ 0.6 & \textbf{0.2 \tiny $\pm$ 0.2}  & 3.8 \tiny $\pm$ 2.2          & 0.8 \tiny $\pm$ 0.6 & \textbf{0.7 \tiny $\pm$ 0.3} & 1.8 \tiny $\pm$ 0.3          & 2.5 \tiny $\pm$ 0.9 \\
$80$    & $1$     & \textbf{2.7 \tiny $\pm$ 0.7}                    & 3.2 \tiny $\pm$ 1.0          & 5.6 \tiny $\pm$ 1.8 & \textbf{2.4 \tiny $\pm$ 0.8} & 2.9 \tiny $\pm$ 0.8          & 5.1 \tiny $\pm$ 1.8 & \textbf{1.5 \tiny $\pm$ 0.7}                    & 1.6 \tiny $\pm$ 0.5          & 6.9 \tiny $\pm$ 1.8 \\
$80$    & $2$     & \textbf{1.1 \tiny $\pm$ 0.6}         & 2.8 \tiny $\pm$ 1.3          & 3.2 \tiny $\pm$ 1.9 & \textbf{1.3 \tiny $\pm$ 0.4} & 2.7 \tiny $\pm$ 1.2          & 3.1 \tiny $\pm$ 1.9 & \textbf{1.0 \tiny $\pm$ 0.3}      & 1.4 \tiny $\pm$ 0.2          & 3.9 \tiny $\pm$ 1.2 \\
$80$    & $4$     & \textbf{0.4 \tiny $\pm$ 0.2}          & 0.6 \tiny $\pm$ 0.4          & 1.2 \tiny $\pm$ 1.1 & \textbf{0.5 \tiny $\pm$ 0.3} & 0.6 \tiny $\pm$ 0.4          & 1.0 \tiny $\pm$ 1.1 & \textbf{0.9 \tiny $\pm$ 0.3}   & 0.9 \tiny $\pm$ 0.2          & 1.7 \tiny $\pm$ 0.6 \\
$800$   & $1$     & \textbf{3.1 \tiny $\pm$ 0.7}                    & 3.5 \tiny $\pm$ 1.1          & 5.8 \tiny $\pm$ 1.6 & \textbf{3.0 \tiny $\pm$ 0.5} & 3.3 \tiny $\pm$ 0.9          & 5.7 \tiny $\pm$ 1.6 & \textbf{1.4 \tiny $\pm$ 0.5}           & 2.0 \tiny $\pm$ 0.4          & 6.8 \tiny $\pm$ 1.0 \\
$800$   & $2$     & \textbf{1.5 \tiny $\pm$ 0.5}           & 3.1 \tiny $\pm$ 1.1          & 3.5 \tiny $\pm$ 1.7 & \textbf{1.2 \tiny $\pm$ 0.4}  & 2.7 \tiny $\pm$ 0.9          & 3.1 \tiny $\pm$ 1.4 & \textbf{1.3 \tiny $\pm$ 0.4}        & 2.0 \tiny $\pm$ 0.5          & 4.7 \tiny $\pm$ 1.3 \\
$800$   & $4$     & 0.5 \tiny $\pm$ 0.3          & \textbf{0.4 \tiny $\pm$ 0.2} & 1.4 \tiny $\pm$ 1.0 & \textbf{0.3 \tiny $\pm$ 0.2} & 0.4 \tiny $\pm$ 0.2 & 1.3 \tiny $\pm$ 0.9 & 0.9 \tiny $\pm$ 0.2& \textbf{0.7 \tiny $\pm$ 0.3} & 0.9 \tiny $\pm$ 0.4 \\ \hline
\end{tabular}}
\end{table}

\begin{table}[h]
\caption{Sliced Wasserstein for the GMM case for the reverse VE SDEs discretized with Euler-Maruyama.}
\label{table:a2}
\renewcommand{\baselinestretch}{1.0}
\renewcommand{\arraystretch}{1.0}
\setlength{\tabcolsep}{2.2pt}
\centering
\resizebox{0.6\columnwidth}{!}{
\begin{tabular}{|ll|ccc|ccc|ccc|}
\cline{1-11}
        &         & \multicolumn{3}{c}{$\sigma = 0.01$}  & \multicolumn{3}{c}{$\sigma = 0.1$} & \multicolumn{3}{c}{$\sigma = 1.0$}  \\ \hline
$d$ & $m$ & CA-DPS  & $\Pi$GDM & DPS & CA-DPS  & $\Pi$GDM & DPS & CA-DPS & $\Pi$GDM & DPS \\ \hline
$8$     & $1$     & \textbf{1.6} \tiny $\pm$ 0.4 & 1.5 \tiny $\pm$ 0.4 & 5.7 \tiny $\pm$ 2.2 & \textbf{1.3} \tiny $\pm$ 0.4 & 1.2 \tiny $\pm$ 0.4 & 5.6 \tiny $\pm$ 2.1 & \textbf{0.8} \tiny $\pm$ 0.3 & 0.9 \tiny $\pm$ 0.3 & 0.9 \tiny $\pm$ 0.3 \\
$8$     & $2$     & \textbf{0.6} \tiny $\pm$ 0.3 & 0.4 \tiny $\pm$ 0.3 & 6.2 \tiny $\pm$ 0.8 & \textbf{1.0} \tiny $\pm$ 0.4 & 0.5 \tiny $\pm$ 0.3 & 6.2 \tiny $\pm$ 2.4 & \textbf{0.8} \tiny $\pm$ 0.2 & 1.0 \tiny $\pm$ 0.3 & 1.2 \tiny $\pm$ 0.4 \\
$8$     & $4$     & \textbf{0.4} \tiny $\pm$ 0.2 & 0.1 \tiny $\pm$ 0.1 & - & \textbf{0.4} \tiny $\pm$ 0.2 & 0.1 \tiny $\pm$ 0.0 & 8.4 \tiny $\pm$ 3.1 & \textbf{0.7} \tiny $\pm$ 0.2 & 0.2 \tiny $\pm$ 0.1 & 0.3 \tiny $\pm$ 0.2 \\
$80$    & $1$     & \textbf{2.5} \tiny $\pm$ 0.7 & 2.9 \tiny $\pm$ 1.4 & 9.1 \tiny $\pm$ 1.3 & \textbf{2.1} \tiny $\pm$ 0.8 & 2.1 \tiny $\pm$ 1.1 & 4.7 \tiny $\pm$ 1.8 & \textbf{1.4} \tiny $\pm$ 0.7 & 1.8 \tiny $\pm$ 0.8 & 1.9 \tiny $\pm$ 0.9 \\
$80$    & $2$     & \textbf{1.2} \tiny $\pm$ 0.4 & 0.8 \tiny $\pm$ 0.7 & 2.2 \tiny $\pm$ 0.9 & \textbf{1.1} \tiny $\pm$ 0.5 & 0.8 \tiny $\pm$ 0.7 & 6.0 \tiny $\pm$ 2.1 & \textbf{1.3} \tiny $\pm$ 0.3 & 1.3 \tiny $\pm$ 0.5 & 1.5 \tiny $\pm$ 0.5 \\
$80$    & $4$     & \textbf{0.4} \tiny $\pm$ 0.1 & 0.1 \tiny $\pm$ 0.0 & - & \textbf{0.3} \tiny $\pm$ 0.2 & 0.1 \tiny $\pm$ 0.1 & 4.4 \tiny $\pm$ 1.6 & \textbf{0.8} \tiny $\pm$ 0.3 & 0.4 \tiny $\pm$ 0.2 & 0.5 \tiny $\pm$ 0.3 \\
$800$   & $1$     & \textbf{3.2} \tiny $\pm$ 0.6 & 3.2 \tiny $\pm$ 1.0 & 6.8 \tiny $\pm$ 1.2 & \textbf{2.8} \tiny $\pm$ 0.5 & 2.8 \tiny $\pm$ 0.7 & 6.4 \tiny $\pm$ 1.5 & \textbf{1.4} \tiny $\pm$ 0.4 & 1.3 \tiny $\pm$ 0.3 & 1.3 \tiny $\pm$ 0.3 \\
$800$   & $2$     & \textbf{1.4} \tiny $\pm$ 0.3 & 0.8 \tiny $\pm$ 0.5 & 7.4 \tiny $\pm$ 0.9 & \textbf{1.2} \tiny $\pm$ 0.3 & 0.8 \tiny $\pm$ 0.4 & 6.4 \tiny $\pm$ 1.9 & \textbf{1.3} \tiny $\pm$ 0.4 & 1.1 \tiny $\pm$ 0.3 & 1.1 \tiny $\pm$ 0.3 \\
$800$   & $4$     & \textbf{0.4} \tiny $\pm$ 0.2 & 0.6 \tiny $\pm$ 0.5 & - & \textbf{0.3} \tiny $\pm$ 0.2 & 0.1 \tiny $\pm$ 0.0 & 5.8 \tiny $\pm$ 1.4 & \textbf{0.8} \tiny $\pm$ 0.3 & 0.4 \tiny $\pm$ 0.2 & 0.4 \tiny $\pm$ 0.2 \\ \bottomrule
\end{tabular}}
\end{table}

\section{More Qualitative Results}
In this section, we depict more reconstructed images using CA-DPS and compare it with those reconstructed by DPS. To this end, we pick 9 images from FFHQ dataset, and conduct super-resolution task ($16 \times$) with a Gaussian noise whose standard deviation is $\sigma = 0.05$. The results are depicted in \cref{fig:fig2}.

Furthermore, to visualize the reconstruction process over 1000 timesteps, we select a single image and display the reconstructed images throughout the denoising process, as illustrated in \cref{fig:fig3}.

\begin{figure*}[h]
\centering  \includegraphics[width=1\textwidth]{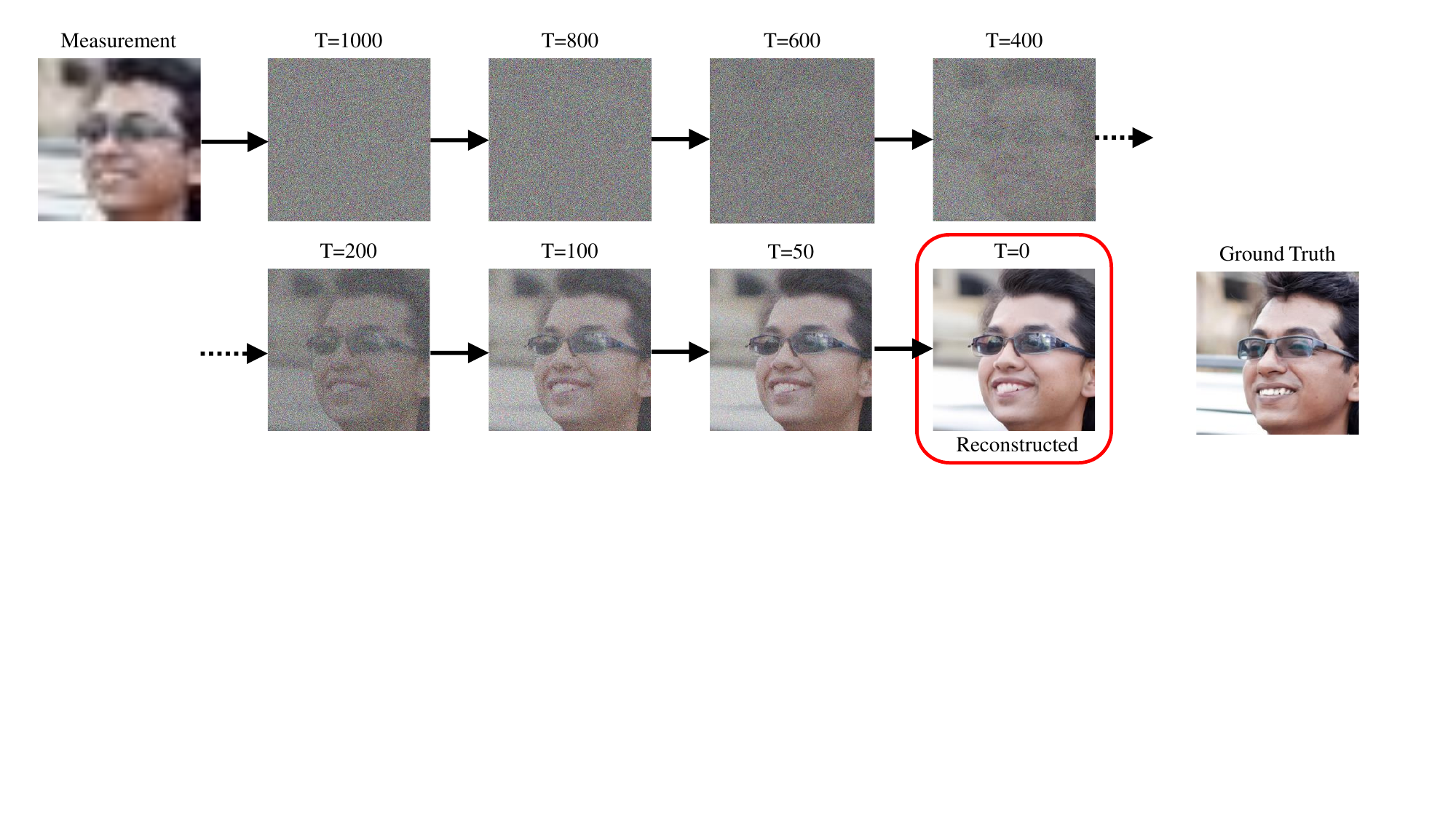}
\vspace{-2mm}
  \caption{Qualitative results on FFHQ dataset.
} \label{fig:fig3}
  \vspace{-2mm}
\end{figure*}

\begin{figure*}[!t]
\centering  \includegraphics[width=0.6\textwidth]{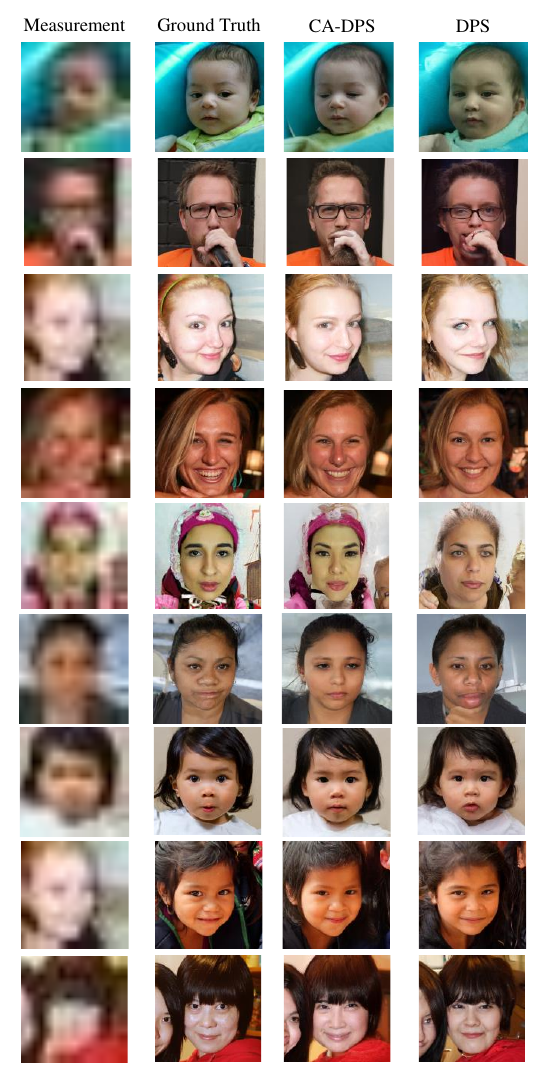}
\vspace{-2mm}
  \caption{Qualitative results on FFHQ dataset.
} \label{fig:fig2}
  \vspace{-2mm}
\end{figure*}

\end{document}